\ifpdf \usepackage[pdftex]{graphicx} \pdfcompresslevel=9
\else \usepackage[dvips]{graphicx} \fi
\newtheorem{theorem}{Theorem}
\newtheorem{proposition}[theorem]{Proposition}
\title[Divergence-Free Shape Interpolation and Correspondence]
      {Divergence-Free Shape Interpolation and Correspondence}
\author[M. Eisenberger, Z. Lähner, D. Cremers]{Marvin Eisenberger \quad \quad Zorah Lähner \quad \quad Daniel Cremers}
\begin{document}

\teaser{
\includegraphics[width=.32\linewidth]{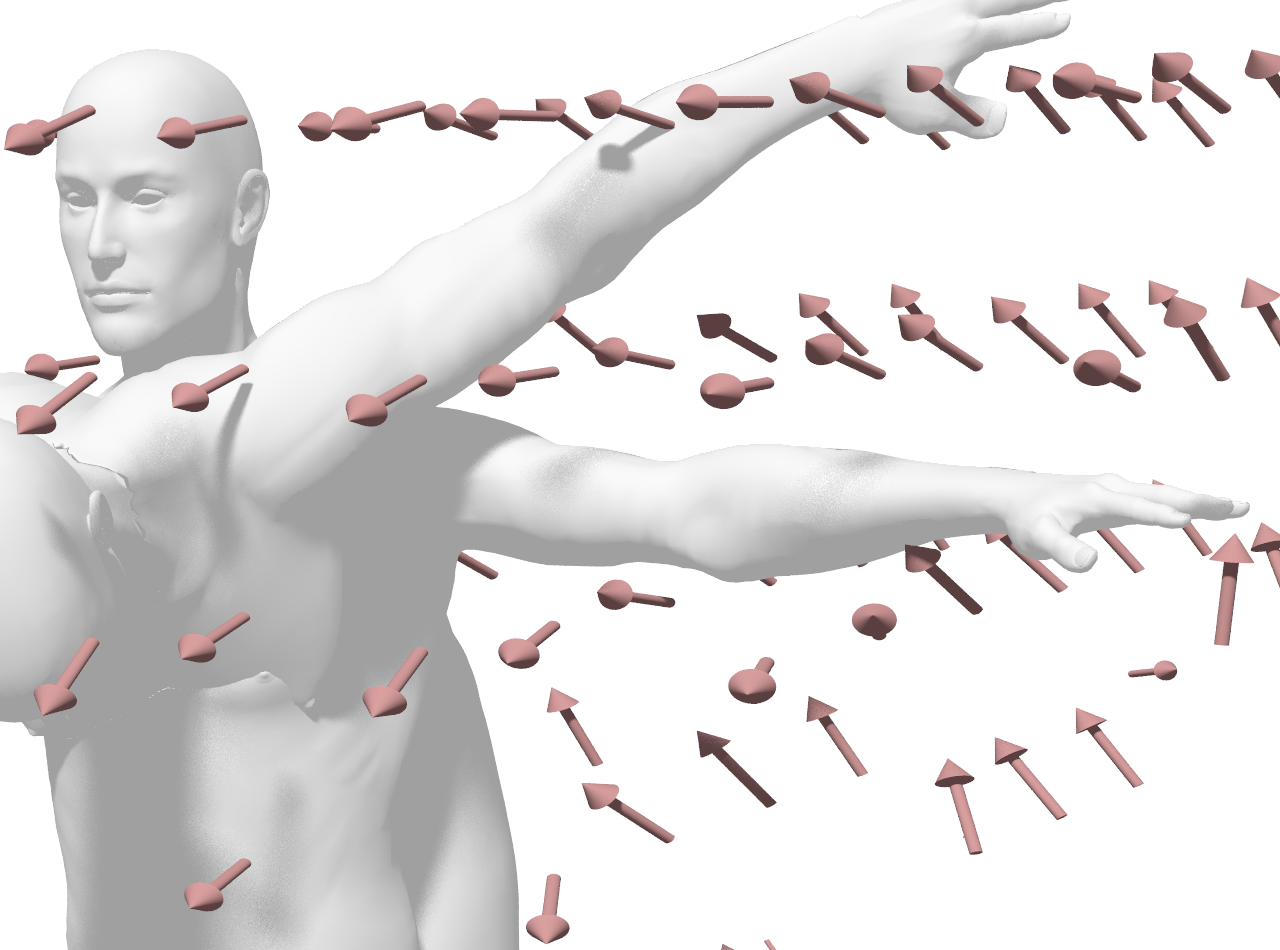}
\includegraphics[width=.64\linewidth]{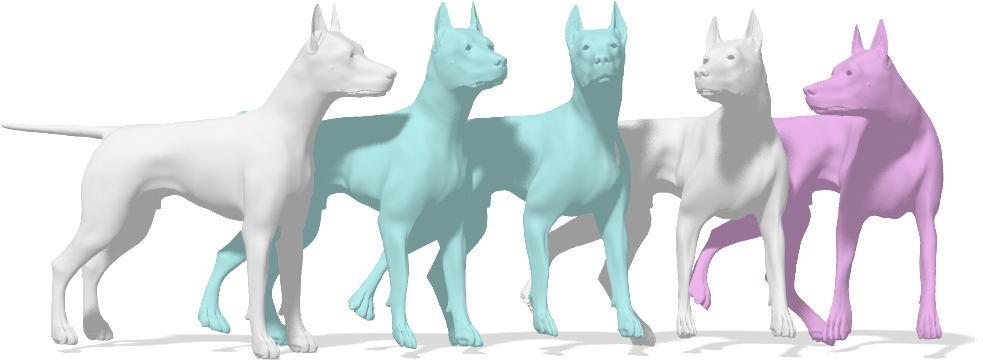}
 \centering
  \caption{Given two input shapes we propose to compute
    a volume preserving deformation field by imposing a
    zero-divergence constraint. The resulting method
    alternates between optimizing the deformation field
    and calculating a correspondence for a small subset
    of vertices. (Left) Example of a deformation field
    in 3D. (Right) Our approach generates a continuous
    family of intermediate shapes along with a highly
    accurate correspondence. The input shapes are shown in white,
    intermediate shapes in blue and one extrapolation is shown in pink. }
\label{fig:teaser}
}

\maketitle

\begin{abstract}
  We present a novel method to model and calculate
  deformation fields between shapes embedded in $\mathbb{R}^D$.
  Our framework combines naturally interpolating
  the two input shapes and calculating correspondences
  at the same time. The key idea is to compute a divergence-free
  deformation field represented in a coarse-to-fine basis using
  the Karhunen-Lo\'{e}ve expansion. The advantages are that there
  is no need to discretize the embedding space and the deformation
  is volume preserving. Furthermore, the optimization is done
  on downsampled versions of the shapes but the morphing can be applied to
  any resolution without a heavy increase in complexity. We show
  results for shape correspondence, registration, inter- and extrapolation
  on the TOSCA and FAUST Scan data sets.

\begin{CCSXML}
  <ccs2012>
  <concept>
  <concept_id>10010147.10010371</concept_id>
  <concept_desc>Computing methodologies~Computer graphics</concept_desc>
  <concept_significance>500</concept_significance>
  </concept>
  <concept>
  <concept_id>10010147.10010371.10010396.10010402</concept_id>
  <concept_desc>Computing methodologies~Shape analysis</concept_desc>
  <concept_significance>500</concept_significance>
  </concept>
  </ccs2012>
\end{CCSXML}

\ccsdesc[500]{Computing methodologies~Computer graphics}
\ccsdesc[500]{Computing methodologies~Shape analysis}

\printccsdesc
\end{abstract}

\section{Challenges in Shape Analysis}\label{sec:introduction}

Handling non-rigidly deformed versions of a 3D shape is at the
heart of numerous problems in computer vision and graphics
ranging from shape comparison, information and style transfer
to the automatic generation of new but meaningful
shapes. In contrast to rigid shape registration which has six degrees of freedom, the problem of finding non-rigid deformations is rather ambiguous and the complexity of the optimization grows
quadratically with the number of vertices in the input shapes.

While many of these problems are intrinsically related, most existing methods
address them independently and do not generalize to a wider range of tasks.
For example, having the same texture on both input shapes can help to
find correspondences between these, while knowing the correspondence makes it
easy to transfer the texture. Doing joint optimization can help to improve
the performance in both tasks. Our approach combines shape registration,
correspondence and interpolation -- bringing different viewpoints to the same question of how to model non-rigid shape deformations.

\emph{Shape registration} aims at finding a transformation of a
shape $\mathcal{X}$ that aligns its surface with another
shape $\mathcal{Y}$ as closely as possible. This is trivial when
the \emph{correspondence} $\pi: \mathcal{X} \to \mathcal{Y}$
is already given, because then we can prescribe a deformation which
aligns each point $x \in \mathcal{X}$ with its match $\pi(x) =
y \in \mathcal{Y}$.  Vice versa, when two surfaces are aligned
a correspondence can easily be found by searching for the
nearest neighbor of each point in the embedding space.
The	\emph{shape interpolation} problem addresses the task of finding a sequence of
intermediate shapes between $\mathcal{X}$ and $\mathcal{Y}$
such that $\mathcal{X}$ is transformed into $\mathcal{Y}$ in a
natural and continuous way. Most methods tackling this
task assume they are given a perfect correspondence between
the initial shapes but the existence of a smooth transition is
actually a requirement for a good correspondence.

Many state-of-the-art correspondence methods can find fairly accurate
matchings but they often rely on pointwise information like descriptor values
or pairwise distances. While these are straightforward to compute, information about
continuity is not encoded or increases the complexity of the optimization
heavily. Consequently, outliers due to intrinsic symmetries or different sampling
prevent a continuous morphing.
In this work we will introduce a representation that can formulate both problems and makes a joint optimization possible.

\section{Related Work}\label{sec:relatedwork}

\subsection{Deformation Fields}

Deformation fields have a long history in image
registration. Ashburner and colleagues made use of deformation
fields for autonomous shape morphing \cite{ashburner2007fast}.
They consider temporally constant deformation fields
offering limited flexibility to capture more complex
deformations. Solving for a space and time dependent
deformation field is a highly underdetermined problem. A
remedy for this issue is provided by the geodesic shooting
approach advocated by
\cite{miller2006geodesic} which only estimates the initial
velocity field for each pixel and then how the
velocity has to propagate in the image domain in order to
preserve the kinetic energy and the momentum of the whole
system. Further improvements of this framework were proposed
in subsequent work, including a Gauss-Newton approach
\cite{ashburner2011diffeomorphic} and a particularly efficient
adjoint calculation \cite{vialard2012diffeomorphic}.

Closely related to our work is \cite{von2006vector} in which the authors also model volume preserving shape deformations using divergence-free vector fields. Here, deformation fields are constructed from hand crafted templates which are meant to be used as interactive shape transformation tools.

It is often beneficial to have a probabilistic interpretation
of deformation fields. This yields a comprehensive description
with explicit schemata to impose uniformity on vector fields
as well as a sound theoretical foundation. Such a model for
image registration and 2D shape registration with a Gaussian
process modeling of the correspondence mapping is proposed in
\cite{albrecht2008statisticaldeformationprior}. Further work
\cite{luethi2016gaussianprocessmorphablemodels,doelz2017efficientcomputation}
specified how one can extend this approach to
Gaussian processes on the surface of a three dimensional
shape.  The authors in \cite{bregler2000recoveringnonrigid},
\cite{torresani2008nonrigidsfm},
\cite{albrecht2008statisticaldeformationprior} and
\cite{paladini2009factorization} also model non-rigid
transformations using a PCA type representation of permitted
motions. Analogously,
\cite{myronenko2010pointsetregistration} and
\cite{ma2014robust} pursue a reproducing kernel Hilbert space
approach to model the vector field interpolation. However, for
all these references the respective vector fields are not
defined on the whole embedding space surrounding the shapes but
rather only at the elements of the considered point clouds and
they do not admit an interpretation as a deformation field.

Another classical approach to shape deformation is based on a rotation invariant representation of triangle meshes \cite{lipman2005linear}. In \cite{zhang2008deformation} it is then presented how this deformation model can be used to compute a sparse set of correspondences.

\subsection{Shape Registration and Matching}

Much work has been done in the direction of shape registration
and matching and we would like to point the interested reader
to in-depth surveys of these topics for an overview
\cite{vankaick11correspsurvey,salvi2007rangereview,tam2013pointcloudsurvey}. Here
we will focus on work that is directly related to our
approach.

A popular line of work in shape matching is based on spectral
decomposition of the surface Laplace-Beltrami operator
\cite{Dubrovina2010MatchingSB}.  This is popular because it
reduces the dimensionality of the problem from the number of
vertices to the number of basis functions chosen
\cite{ovsjanikov2012functional}. Nevertheless, extracting the
correspondence from the low dimensional representation is
still a complex problem and often retrieved solutions are
noisy or hard to compute \cite{rodola2015cpd}.  We also use a
spectral approach but, instead of a basis for functions on the
surface, we represent deformation fields in the embedding
space using the eigenfunctions of the standard Laplacian.

Methods based on Multi-Dimensional Scaling find
correspondences by reembedding and then aligning shapes in a
(possibly smaller) embedding space where the complexity is
reduced \cite{bronstein2006generalized,aflalo2016spectral}.
\cite{chen2015robust} calculate a robust non-rigid registration
based on Markov random fields but can not retrieve a continuous
deformation.
In \cite{myronenko2010pointsetregistration} and
\cite{ma2014robust} the authors address the non-rigid
registration problem by modeling one point cloud as a Gaussian
mixture model. Moreover, they also determine the
correspondences and point mappings in an alternating manner
using a expectation maximization algorithm. This
work is strongly related to our framework. Like
our approach they directly model the correspondence mapping of
all points as a Gaussian process type mapping. There also exist extensions of this method which additionally include descriptor values \cite{ma2016non,ma2017feature}.

\subsection{Shape Interpolation}

Although registration methods often compute a deformation
between shapes, the focus is not on producing realistic
intermediate shapes. Most methods realistically interpolating
between shapes assume to be given a full
correspondence in advance. \cite{kilian2007geometric} interprets the solution to interpolation as geodesics in the pointwise shape space. \cite{wirth2011shapespace} and
\cite{heeren2016shellsplines} model a space of shells with
metrics induced by physical deformation energies in which
geodesics or splines represent natural interpolations between
shells. \cite{vontycowicz2015realtime} make real-time
interpolation on a set of preprocessed, given shapes with
arbitrary resolution possible.

\cite{alexa2000asrigidas} only needs a handful of correct
correspondences to find a volume preserving deformation
between two shapes but they require both to be segmented in
compatible simplicial complexes.
In \cite{xu2005poisson} a dense deformation field similar to ours
is calculated but the method depends on a consistent triangulation
of the inputs.
Other directions include taking user input to guide deformations in the right direction \cite{vaxman2015moebiustrans}
or rely on a known or learned model to generate new shapes \cite{Gao2017DataDrivenSI}.

\section{Contribution}\label{sec:contribution}

In the following, we will introduce a mathematical framework
which allows to jointly tackle the problems of shape
interpolation/extrapolation, shape registration and
correspondence estimation. Our method solely operates on two given 3D point clouds and in particular requires no connectivity information like a mesh. We propose to
estimate a smooth and volume preserving 3D deformation field
prescribing a plausible interpolation of these input shapes. More
specifically, we solve an initial value problem for determining the shape deformation. This framework allows us to incorporate physical
assumptions about the deformation field. We
suggest to impose volume preservation by enforcing zero
divergence.  More specifically, we represent the deformation
field as the curl of a potential function and propose a
natural coarse-to-fine basis representation of these potential
functions.  The initial value problem is then integrated by a
Runge-Kutta scheme. We use an expectation maximization approach to
simultaneously determine a subset of the unknown point-to-point
correspondences and the optimal deformation field parameters.
The objective is aligning two shapes with a preferably uniform
deformation field. We
demonstrate that the proposed framework can be used to create
plausible shape interpolations and extrapolations in numerous experiments. Moreover,
it provides a shape correspondence which compares to
state-of-the-art correspondence methods.

\begin{figure*}
	\centering
	\begin{subfigure}[b]{0.45\linewidth}
		\includegraphics[width=\linewidth]{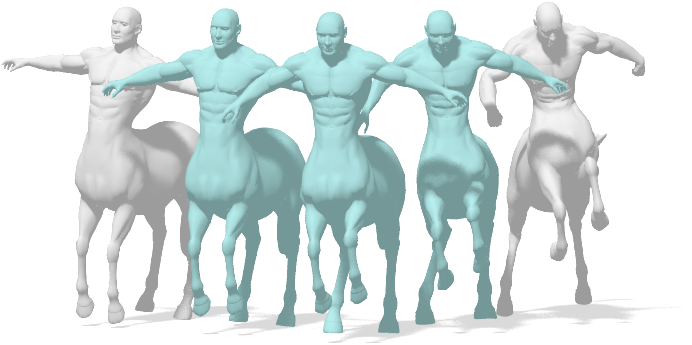}
		\caption{Centaur.}
		\label{fig:ipol_rearing}
	\end{subfigure}
	\begin{subfigure}[b]{0.48\linewidth}
		\includegraphics[width=\linewidth]{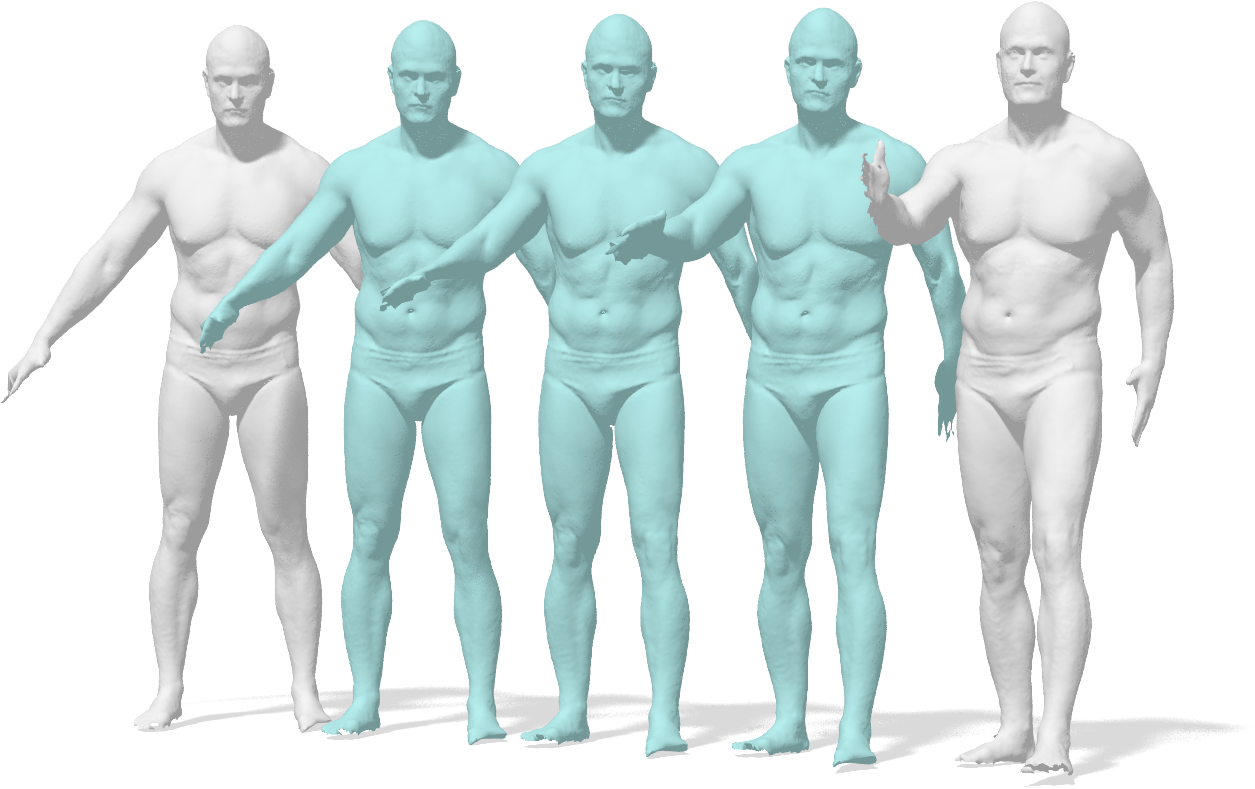}
		\caption{Human.}
		\label{fig:ipol_faust}
	\end{subfigure}
  \begin{subfigure}[b]{0.95\linewidth}
		\includegraphics[width=0.9\linewidth]{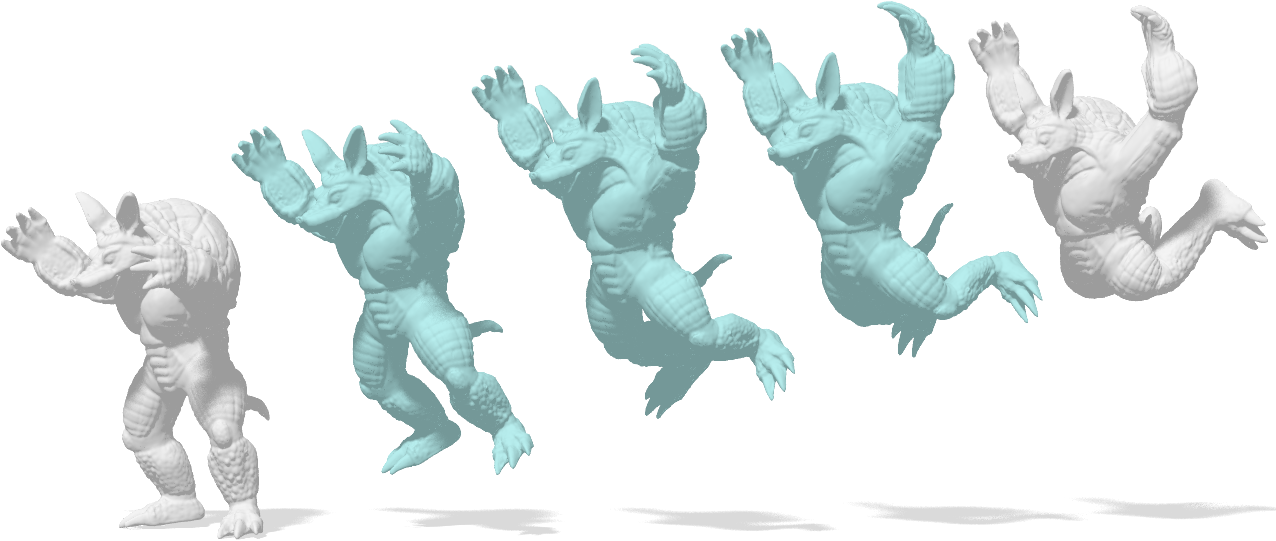}
		\caption{Armadillo.}
		\label{fig:ipol_armadillo}
	\end{subfigure}
	\caption{Three examples of shapes that are morphed into one another according to the initial value problem of Eq.~\eqref{eq_initalvalueproblem}. The centaur (a) and the human (b) are from the TOSCA \cite{bronstein2008numerical} and FAUST \cite{Bogo:CVPR:2014} dataset respectively. The armadillo (c) is from the AIM@SHAPE shape repository \cite{aim-shape}.  (b) is a scan of a real person and very high resolution ($214k$ vertices). The source and target shape are shown in white and the interpolations at times $t=0.25, 0.5, 0.75$ in blue. The translation is not part of our deformation and was only introduced for clarity in the figures.
	}
	\label{fig:interpol}
\end{figure*}

\section{Problem Formulation}\label{sec:formulation}

This section gives an introduction into the problem we want to solve and the mathematical background we use in later sections.

\subsection{Deformation field shape correspondences}

Consider two discrete sets of points $\mathcal{X}=\{x_1,\dots,x_N\}\subset\Omega$ and $\mathcal{Y}=\{y_1,\dots,y_M\}\subset\Omega$ contained in a compact domain $\Omega\subset\mathbb{R}^D$. The points $x_n$ and $y_m$ are assumed to be uniformly sampled from the surface of two similar $D$-dimensional shapes. The shape registration problem now addresses the task of aligning the point clouds $\mathcal{X}$ and $\mathcal{Y}$ in a meaningful manner, such that similar regions of the two shapes are matched onto each other. In particular we are looking for a morphing $f:\Omega\to\Omega$, such that the mapped points $f(x_n)$ fit to the shape $\mathcal{Y}$.

Our approach chooses these mappings $f$ in such a way that they imitate plausible transformations in the real world. For this purpose we make some natural assumptions about the trajectories of the transformed points $x_n$. For once we require the points of our shape to move smoothly over time. We would also like points in a certain neighborhood to shift in a uniform manner. This assumption yields to some extent that the determined correspondences are continuous. Moreover, the volume of shifted objects should remain the same.
We therefore assume that every point $x_n\in\mathcal{X}$ moves according to the following initial value problem:

\begin{equation}
\label{eq_initalvalueproblem}
\begin{cases}
\dot{x}(t)=v(x(t)).\\
x(0)=x_n.
\end{cases}
\end{equation}

In this context $v:\Omega\to\mathbb{R}^D$ is some fixed deformation field moving the point $x_n$ over time. In order to ensure a uniform movement of the samples the vector field $v$ has to be sufficiently smooth. We will even assume that $v\in C^\infty(\Omega,\mathbb{R}^D)$, which yields the following well-known result:
\begin{proposition}
	For a deformation field $v\in C^\infty(\Omega,\mathbb{R}^D)$, the initial value problem \eqref{eq_initalvalueproblem} admits a solution on the compact domain $\Omega$ and this solution is unique. Furthermore it is infinitely many times continuously differentiable $x\in C^\infty([0,1],\mathbb{R}^D)$
\end{proposition}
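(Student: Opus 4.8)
The plan is to invoke the classical Picard–Lindelöf (Cauchy–Lipschitz) existence and uniqueness theorem for ordinary differential equations, and then bootstrap the regularity of the solution from the smoothness of $v$. The statement is a standard result, so the proof is really a matter of verifying that the hypotheses of the relevant theorems are met and then assembling them. I would organize it in three stages: local existence and uniqueness, global continuation on the time interval $[0,1]$, and finally the $C^\infty$ regularity of the trajectory.

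\medskip

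First I would establish local existence and uniqueness. Since $v \in C^\infty(\Omega, \mathbb{R}^D)$ and $\Omega$ is compact, $v$ is in particular continuously differentiable with bounded derivative on $\Omega$, hence globally Lipschitz continuous on $\Omega$ with some constant $L$. The Picard–Lindelöf theorem then guarantees that for the initial condition $x(0) = x_n$ there exists a unique solution $x(t)$ on some maximal interval of existence. The standard argument here is to recast \eqref{eq_initalvalueproblem} as the integral equation $x(t) = x_n + \int_0^t v(x(s))\,ds$ and apply the Banach fixed-point theorem to the Picard iteration operator on a suitable closed ball of continuous functions, using the Lipschitz bound to obtain a contraction on a short enough time interval.

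\medskip

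Second, I would argue that the solution extends to the full interval $[0,1]$. The only way a solution to an ODE can fail to be continued is if the trajectory escapes every compact subset of the domain in finite time. Here the trajectory is confined to the compact domain $\Omega$ by hypothesis (the deformation field maps $\Omega$ into itself, so the flow cannot leave $\Omega$), and $v$ is bounded on $\Omega$ by $\|v\|_\infty < \infty$. Consequently the solution cannot blow up or reach the boundary in finite time in a way that prevents continuation, and the maximal interval of existence must contain $[0,1]$. Uniqueness on all of $[0,1]$ follows by patching together the local uniqueness statements.

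\medskip

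Finally, the regularity claim $x \in C^\infty([0,1], \mathbb{R}^D)$ follows by a bootstrapping argument. Since $x$ is differentiable and $\dot{x}(t) = v(x(t))$ with $v$ continuous and $x$ continuous, the right-hand side is continuous, so $x \in C^1$. But then $v \circ x$ is $C^1$ (composition of $C^\infty$ with $C^1$), so $\dot{x} = v(x)$ is $C^1$, whence $x \in C^2$; iterating, if $x \in C^k$ then $v(x) \in C^k$ by the chain rule and smoothness of $v$, so $\dot{x} \in C^k$ and thus $x \in C^{k+1}$. By induction $x \in C^\infty([0,1], \mathbb{R}^D)$. The main obstacle, if any, is the global continuation step: one must justify carefully that the trajectory stays inside $\Omega$ (or at least within a fixed compact set) for all $t \in [0,1]$, which relies on the assumption that $v$ is a genuine deformation field of $\Omega$ into itself; the local existence and the regularity bootstrap are otherwise routine.
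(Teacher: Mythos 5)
Your proposal is correct and follows essentially the same route as the paper: the paper's proof is a one-line citation to Picard--Lindel\"of together with a regularity lemma from Teschl's ODE text, and your three stages (local existence/uniqueness via the contraction argument, continuation to $[0,1]$ using boundedness of $v$ on the compact domain, and the $C^\infty$ bootstrap) are precisely the content of those cited results. You are in fact slightly more careful than the paper in flagging that the continuation step needs the trajectory to remain in $\Omega$ --- a point the paper only addresses implicitly later via its Dirichlet boundary conditions.
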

\begin{proof}
	This follows directly from the theorem of Picard-Lindelöf \cite[Theorem 2.5]{teschl2012ordinary} and \cite[Lemma 2.3]{teschl2012ordinary}.
\end{proof}

As a consequence we can choose the correspondence mapping $f$ to be the solution operator of \eqref{eq_initalvalueproblem} evaluated at an arbitrary time $t$. For convenience we choose $t=1$:
\begin{equation}
\label{eq_deformationoperator}
f:\begin{cases}
\Omega\to\mathbb{R}^D.\\
x_n\mapsto x(1).
\end{cases}
\end{equation}

The advantage of this framework is that it provides an intuitive notion of deformation when analyzing shape correspondences. Therefore, one does not only get a matching of two shapes but also all intermediate states constituting the underlying transformation. Those are typically more meaningful than merely performing linear interpolation between the initial and the final position of each point $x_n$. Especially when looking at nearly isometric shapes having a continuous correspondence and a natural deformation are inherently connected and solving for both simultaneously improves the results considerably.

\subsection{Divergence free deformations}

Another convenient quality of the presented framework is that it enables us to incorporate assumptions about the deformation field into our model. One reasonable restriction arises from the mathematical investigation of fluid dynamics \cite{chorin1993fluid}, namely the restriction to divergence free velocity fields:
\begin{equation}
\label{eq_divergencezero}
\nabla\cdot v=0.
\end{equation}

It is well known that this local property confines the vector field in such a way, that it yields volume conservation over time for any considered part $U\subset\Omega$ of the shape:

\begin{proposition}
	\label{proposition:volumepreservation}
	Consider an open set $U\subset\Omega$. Let now $U(t)$ be the set of solutions of \eqref{eq_initalvalueproblem}, if each point in $U$ is shifted individually. If we assume that the deformation field $v$ is divergence-free \eqref{eq_divergencezero}, then the volume of $U(t)$ is constant over time:
	\begin{equation}
	\frac{\mathrm{d}}{\mathrm{d}t}\int_{U(t)}\mathrm{d}x=0.
	\end{equation}
\end{proposition}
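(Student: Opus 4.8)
The plan is to reduce the statement to the classical Liouville formula by pulling the time-dependent domain $U(t)$ back to the fixed reference set $U$ through the flow of the deformation field. Let $\phi_t$ denote the solution operator of \eqref{eq_initalvalueproblem}, so that $U(t)=\phi_t(U)$ and, by the preceding Proposition, $\phi_t$ is a smooth diffeomorphism with $\phi_0=\mathrm{id}$. A change of variables then yields
\begin{equation}
\int_{U(t)}\mathrm{d}x=\int_U \det\!\big(D_{x_0}\phi_t(x_0)\big)\,\mathrm{d}x_0,
\end{equation}
where $J(t,x_0):=\det(D_{x_0}\phi_t(x_0))$ is the Jacobian determinant of the flow. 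Since $J$ is continuous in $t$, equals $1$ at $t=0$, and never vanishes because $\phi_t$ is a diffeomorphism, the absolute value may be dropped and the entire $t$-dependence is now carried by the integrand $J$.

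Next I would determine the evolution of $J$ in time. Differentiating the flow equation $\partial_t\phi_t=v(\phi_t)$ with respect to the initial condition $x_0$ shows that the matrix $A(t):=D_{x_0}\phi_t$ obeys the linear variational equation $\dot A=Dv(\phi_t)\,A$ with $A(0)=I$. Applying Jacobi's formula for the derivative of a determinant gives
\begin{equation}
\dot J=J\cdot\operatorname{tr}\!\big(A^{-1}\dot A\big)=J\cdot\operatorname{tr}\!\big(A^{-1}Dv(\phi_t)A\big)=J\cdot\operatorname{tr}\big(Dv(\phi_t)\big),
\end{equation}
where the last step uses the cyclic invariance of the trace. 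Because $\operatorname{tr}(Dv)=\nabla\cdot v=0$ by the divergence-free assumption \eqref{eq_divergencezero}, we conclude $\dot J=0$, and together with $J(0,x_0)=1$ this forces $J(t,x_0)\equiv 1$ for all $t$ and $x_0$.

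Finally, substituting $J\equiv 1$ back into the change-of-variables identity gives $\int_{U(t)}\mathrm{d}x=\int_U\mathrm{d}x_0=\mathrm{vol}(U)$, which is independent of $t$; differentiating in $t$ then yields the claimed vanishing derivative. The only genuine obstacle is making the interchange of differentiation and integration rigorous and justifying the variational equation for $A(t)$. Both rest on the smooth dependence of the flow on its initial data, which is precisely the $C^\infty$ regularity supplied by the earlier Proposition, so no hypotheses beyond $v\in C^\infty(\Omega,\mathbb{R}^D)$ are required.
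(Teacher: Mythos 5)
Your proof is correct and takes essentially the same route as the paper: the paper's entire proof is a one-line citation of Liouville's formula (Lemma 8.8 in Teschl's ODE book), which states exactly that the flow Jacobian satisfies $\dot J=(\nabla\cdot v)\,J$, and your argument --- change of variables through the flow, the variational equation $\dot A=Dv(\phi_t)A$, and Jacobi's formula with $\operatorname{tr}(Dv)=\nabla\cdot v=0$ --- is the standard derivation of precisely that result. In effect you have supplied a self-contained proof of the lemma the paper merely invokes, with the minor technical points (dropping the absolute value of $J$, smooth dependence on initial data) handled correctly.
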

\begin{proof}
	This statement is a particular case of the result in \cite[Lemma 8.8]{teschl2012ordinary}.
\end{proof}

Helmholtz's theorem \cite{rutherford1962vectorstensors} implies that any sufficiently smooth vector field on the compact domain $\Omega$ can be decomposed into the sum of a curl free and a divergence free component. It furthermore provides us with an explicit construction of the divergence free component of any velocity field:
\begin{equation}
\label{eq_vectorfieldfrompotential}
v:=\nabla\times\Phi.
\end{equation}

In this context $\Phi:\Omega\to\mathbb{R}^D$ is a $C^\infty$ potential field. This vector field $\Phi$ arises from a construction in the Helmholtz decomposition and it typically depends on the considered vector field $v$. Unfortunately constructing $\Phi$ from $v$ is not straightforward and it is in general not computationally feasible. Therefore we allow the potential $\Phi$ to be an arbitrary vector field. Then we can define $v$ to be its curl in analogy to \eqref{eq_vectorfieldfrompotential}. Either way we get a divergence free vector field $v$ due to the following basic property of the curl operator:
\begin{equation}
\nabla\cdot(\nabla\times\Phi)=0.
\end{equation}

In the case of $D=3$ spatial dimensions the construction of $v$ in \eqref{eq_vectorfieldfrompotential} admits the following form:
\begin{equation}
\label{eq_vectorfieldfrompotential3D}
v=\begin{pmatrix}
\partial_2\Phi_3-\partial_3\Phi_2 \\ \partial_3\Phi_1-\partial_1\Phi_3 \\ \partial_1\Phi_2-\partial_2\Phi_1
\end{pmatrix}=\begin{pmatrix}
0 \\ \partial_3\Phi_1 \\ -\partial_2\Phi_1
\end{pmatrix}+\begin{pmatrix}
-\partial_3\Phi_2 \\ 0 \\ \partial_1\Phi_2
\end{pmatrix}+\begin{pmatrix}
\partial_2\Phi_3 \\ -\partial_1\Phi_3 \\ 0
\end{pmatrix}.
\end{equation}

\section{Method}\label{sec:method}

Regarding relevant applications we will mainly restrict ourselves to the case of $D=3$. However, extensions to higher dimensions or the 2D case are straightforward.

\subsection{Spatial representation}

We need to describe the velocity fields $v$ in a more tangible manner such that we obtain a computationally feasible method.
The problem is that there are infinitely many choices for functions $\Phi\in C^\infty(\Omega,\mathbb{R}^D)$. The most straightforward approach is choosing a discretization of the embedding space, e.g. with a voxel grid. The potential and deformation fields can then be defined by assigning a three-dimensional vector to every voxel. The problem with this approach is that it has cubic complexity which becomes costly very fast. Furthermore, we do not get a spatially continuous deformation and loose volume conservation and other desirable properties as a consequence. In the following we will introduce a finite, linear basis $\{v_1,...,v_K\}$ for any velocity field on $\Omega$ and derive a formulation to restrict it to only span smooth, divergence-free fields. The number of basis function can be adjusted for either speed or expressiveness.

The eigenfunctions of the Laplace-Beltrami operator are often used in shape analysis because of their useful properties like invariance to non-rigid deformations, smoothness and natural ordering. We use a similar basis for smooth, divergence-free vector fields in $\mathbb{R}^D$. Without loss of generality the considered domain is assumed to be a $D$-dimensional cube $\Omega:=[0,1]^D$ and we translate and scale any shape to generously fit inside.
We start with the eigenfunctions $\{\phi_1,\phi_2,...\}$ of the standard Laplacian $\Delta$ on $\Omega$:
\begin{equation}
\Delta\phi_k=\lambda_k^\Delta\phi_k.
\end{equation}

This basis of eigenfunctions $\{\phi_1,\phi_2,...\}$ is ordered such that the eigenvalues $0\geq\lambda_1^\Delta\geq\lambda_2^\Delta\geq...$ are descending. Furthermore, we require the Laplacian to admit Dirichlet boundary conditions $\Phi|_{\partial\Omega}=0$. The $\phi_k$ can be determined analytically and they are exactly the $\sin$ elements of the Fourier basis:
\begin{equation}
\label{eq_fourierbasis}
\{\phi_1,\phi_2,...\}=\biggl\{\prod_{d=1}^D\frac{1}{2}\sin(\cdot\pi j_d)\bigg|(j_1,...,j_D)\in\mathbb{N}^D\biggr\}.
\end{equation}

These functions $\phi_k$ now form an orthonormal basis wrt. the $\|\cdot\|_{L^2(\Omega)}$ norm. The eigenvalue $\lambda_k^\Delta$ of $\Delta$ corresponding to the eigenfunction $\phi_k$ is the following:
\begin{equation}
\lambda_k^\Delta:=-\pi^2\sum_{d=1}^{D}j_d^2.
\end{equation}

We can now map the potential basis to the velocity field to obtain a feasible description $v$. For this purpose we directly insert the basis elements $\phi_k$ from \eqref{eq_fourierbasis} into the places of the components $\Phi_1,...,\Phi_D$ of $\Phi$ in \eqref{eq_vectorfieldfrompotential}. Due to the linearity of the curl operator $\nabla\times\cdot$ this can be done for every entry of $\Phi$ at a time, see \eqref{eq_vectorfieldfrompotential3D} for $D=3$. Overall, we obtain a basis $\{v_1,v_2,...\}$ of the velocity field $v$. In the 3D case it is explicitly defined as:
\begin{equation}
\label{eq_basisV}
\{v_1,v_2,...\}=\bigcup_{k=1}^\infty\biggl\{\begin{pmatrix}
0 \\ \partial_3\phi_k \\ -\partial_2\phi_k
\end{pmatrix},\begin{pmatrix}
-\partial_3\phi_k \\ 0 \\ \partial_1\phi_k
\end{pmatrix},\begin{pmatrix}
\partial_2\phi_k \\ -\partial_1\phi_k \\ 0
\end{pmatrix}\biggr\}.
\end{equation}

In this context the basis elements $v_k$ are again sorted according to the eigenvalues $\lambda_k^\Delta$ of the corresponding $\phi_k$ in descending order. Note that there are in general multiple basis functions $v_k$ for each eigenvalue $\lambda_k^\Delta$.
This yields a feasible description of the velocity field by computing $v$ as a linear combination of the first $K$ basis elements:
\begin{equation}
\label{eq_trunckarhunenloeve}
v(x)=\sum_{k=1}^{K}v_k(x)a_k.
\end{equation}

The coefficients $a_k$ can be defined as random variables with a Gaussian prior distribution $a_k\sim\mathcal{N}(0,\lambda_k)$. This assumption yields a probability distribution over all admissible deformation fields $v$ which we will use later to obtain the optimization energy function. The weights $\lambda_k$ are constructed from the eigenvalues $\lambda_k^\Delta$ in the following manner:
\begin{equation}
\label{eq_lambdak}
\lambda_k:=\bigl(-\lambda_k^\Delta\bigr)^{-\frac{D}{2}}=\biggl(\pi^2\sum_{d=1}^{D}j_d^2\biggr)^{-\frac{D}{2}}.
\end{equation}

Intuitively this kind of weighting promotes a damping of the high frequency components of $v$ and therefore yields a uniform vector field, because the coefficients $a_k$ are sampled with a smaller variance $\lambda_k$. This also justifies the truncation in \eqref{eq_trunckarhunenloeve}, because the weights corresponding to high frequency basis elements are insignificantly small anyway. The mathematical background of the sampling approach in \eqref{eq_trunckarhunenloeve} is provided by the Karhunen-Loève expansion \cite[Ch. 11]{sullivan2015UC} which is an extension of the principal component analysis (PCA) for more general vector spaces.

The Dirichlet boundary conditions automatically guarantee that there is no flow out of the domain $\Omega$, in the sense that the components of $v_k$ are orthogonal to the outer normals at the boundary $\partial\Omega$. This can be easily verified for the case $D=3$ by computing the basis elements \eqref{eq_basisV} inserting \eqref{eq_fourierbasis}, but we refrain from proving this property here. It also becomes obvious by looking at instances of the velocity basis functions in Figure~\ref{fig:basisFunctions}.

\newsavebox{\matcap}
\savebox{\matcap}{$\begin{pmatrix}\partial_2\phi_{\tilde{k}}\\-\partial_1\phi_{\tilde{k}}\\0\end{pmatrix}$}

\begin{figure}
	\centering
	\includegraphics[trim={40 0 0 0},clip,width=\linewidth]{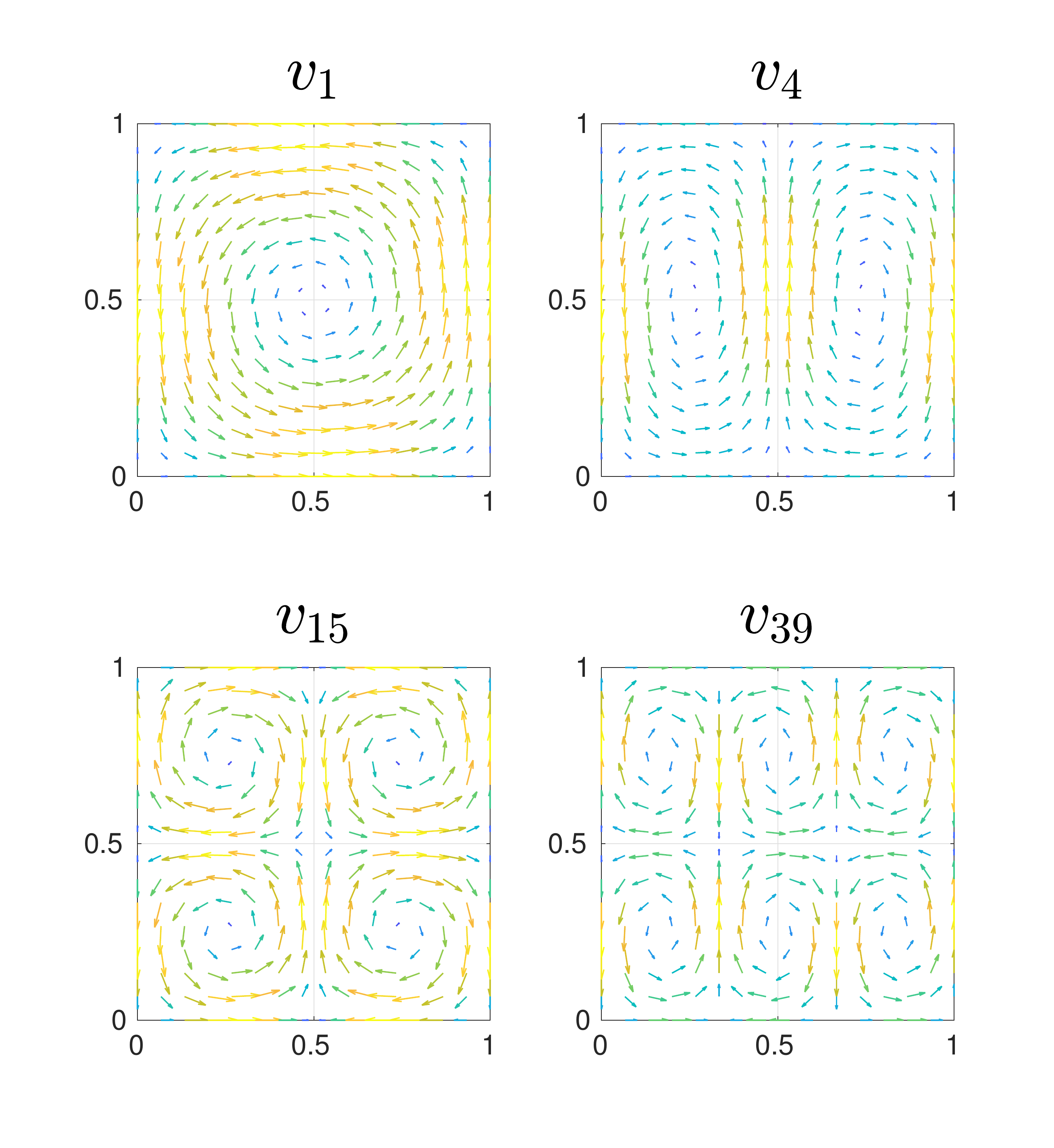}
	\caption{Cross section of some deformation field basis functions $v_k:\Omega\to\mathbb{R}^3$ at $x_3=0.5$. Notice the low frequency structures for low $k$ and
		increasing frequencies with higher indices.
	}
	\label{fig:basisFunctions}
\end{figure}

Our representation of the velocity field has several beneficial properties. First of all, it can be evaluated at any point in the domain $\Omega$ and we do not need to introduce a spatial discretization. We will use this in our experiments to make our method applicable to very high resolution shapes. Secondly, the particular weighting of the summands and the truncation in \eqref{eq_trunckarhunenloeve} induces an inbuilt low-pass filtering which produces smooth deformations because it favors the low frequency basis elements.

Classical shape registration methods like the iterative closest point (ICP) algorithm \cite{besl1992method} determine a rigid transformation between two shapes. Interestingly, our approach can be interpreted as a direct extension of these methods. The deformation fields corresponding to translations and rotations are divergence free and uniform, and therefore, contained in our framework by design. This becomes even clearer when the basis function $v_1$ in Figure \ref{fig:basisFunctions} is examined. It is fairly similar to a rotation around the $x_3$ axis. This especially holds near the center of the domain $\Omega$ and deteriorates at its boundary $\partial\Omega$. Actually it is fairly straightforward to verify, that this equivalence holds up to first order around the center. The basis elements $v_2$ and $v_3$ express rotations around the $x_2$ and $x_1$ axis in a similar way. The considerations in this subsection raise the question whether a cubic domain $\Omega$ is the best choice for our purposes. Following the work in \cite{zhao2007unitCircle}, \cite{zhao2008unitCircle} we could pursue our approach in a spherical domain. This would lead to more complex basis functions $v_k$ but the first three eigenfunctions would span the space of rotations without undesirable artifacts at the boundaries of the domain. Although this would be a nice theoretical property we refrain from using these basis functions here due their complex structure, especially because this does not lead to a quantifiable improvement of our method.

\subsection{Temporal discretization}

In order to evaluate the correspondence mapping $f$ in \eqref{eq_deformationoperator} we have to solve the initial value problem \eqref{eq_initalvalueproblem} by a numerical integration scheme. The simplest choice in this context is the explicit Euler method. However, we decided to use a second order Runge-Kutta method \cite[Ch. 9]{griffiths2010numericalmethods}, because it has a significantly higher accuracy and therefore enables us to choose a coarser discretization. We subdivide the time space in an equidistant grid with $T\in\mathbb{N}$ intervals and set the step size $h=\frac{1}{T}$. This yields the following explicit iteration scheme:
\begin{equation}
\label{eq_rungekutta}
\begin{cases}
x_n^{(0)}:=x_n.\\
x_n^{(t+1)}:=x_n^{(t)}+hv\biggl(x_n^{(t)}+\frac{h}{2}v\bigl(x_n^{(t)}\bigr)\biggr).\\
f_n:=x_n^{(T)}.
\end{cases}
\end{equation}

We typically choose $T\in\{1,...,100\}$ in our experiments. In general, we have to make a trade off between runtime and accuracy when selecting a proper number of steps $T$. We commonly also get meaningful transformations if we choose $T$ to be small but we might loose some key properties of our framework like the volume preservation. This effect is illustrated in Figure \ref{fig:rotatedBat} for the 2D shape of a bat transformed by a 90 degree rotation around the center. Note that the deformation field corresponding to this transformation is actually not contained in our framework due to our choice of domain and boundary conditions, see discussion in the previous subsection. If for this setup we now choose too few time steps $T$, the shape shifts outward and the area expands. On the other hand, this effect becomes insignificantly small if we choose $T\geq 10$.

\begin{figure}
	\centering
  \includegraphics[width=.45\linewidth]{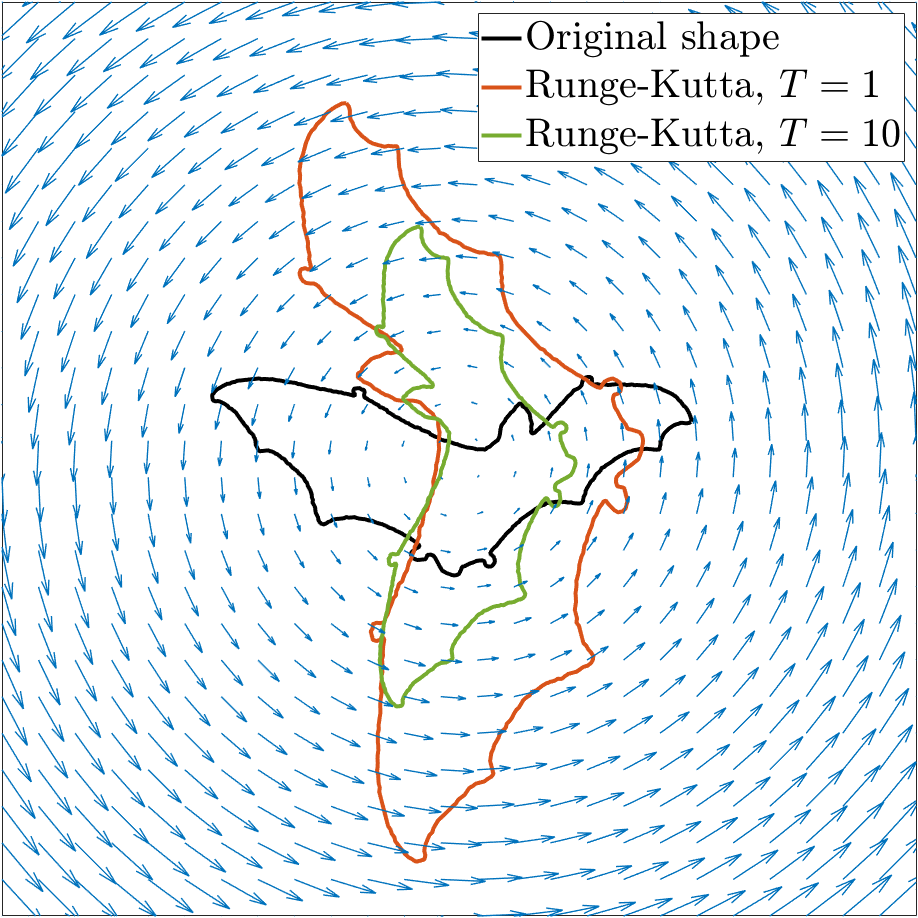}
%
%
\definecolor{mycolor1}{rgb}{0.00000,0.44700,0.74100}%
\begin{tikzpicture}

\begin{axis}[%
width=.34\linewidth,
at={(2.6in,1.284in)},
scale only axis,
xmin=0,
xmax=100,
xlabel={\footnotesize $T$},
xlabel style={at={(0.5,0.05)}},
ymode=log,
ymin=1e-06,
ymax=10,
yminorticks=true,
ylabel={\tiny Relative Area Expansion},
ylabel style={at={(0.16,0.48)}},
yticklabels={\tiny{$10^{-6}$},\tiny{$10^{-3}$},\tiny{$10^0$\ \ }},
axis background/.style={fill=white},
legend style={legend cell align=left,align=left,draw=white!15!black},
xmajorgrids,
ymajorgrids,
]
\addplot [color=mycolor1,solid,line width=2.0pt,forget plot]
  table[row sep=crcr]{%
1	1.52201704740628\\
2	0.199301099256236\\
3	0.0574368661302349\\
4	0.0239944434283398\\
5	0.0122355842910901\\
6	0.00706709572757771\\
7	0.00444581507647887\\
8	0.00297655855662781\\
9	0.00208975324928436\\
10	0.00152305991176752\\
11	0.00114410849494529\\
12	0.000881152565904963\\
13	0.000692992167807806\\
14	0.000554813800355196\\
15	0.000451062933415037\\
16	0.000371650923494753\\
17	0.000309838989493127\\
18	0.000261009025013903\\
19	0.000221924047022495\\
20	0.00019026932494692\\
21	0.000164359806397604\\
22	0.000142948993748176\\
23	0.000125101348676623\\
24	0.000110105421421812\\
25	9.74136456648427e-05\\
26	8.65999325785649e-05\\
27	7.73293560625881e-05\\
28	6.93361847552401e-05\\
29	6.24077617805846e-05\\
30	5.63725376662236e-05\\
31	5.10910903835468e-05\\
32	4.64493191778518e-05\\
33	4.23532376561631e-05\\
34	3.87249554873708e-05\\
35	3.54995519729981e-05\\
36	3.26226248233613e-05\\
37	3.00483544176267e-05\\
38	2.77379647296519e-05\\
39	2.56584917645626e-05\\
40	2.37817920785009e-05\\
41	2.20837399828705e-05\\
42	2.05435739789006e-05\\
43	1.91433619282337e-05\\
44	1.7867561228667e-05\\
45	1.67026554268515e-05\\
46	1.56368526388892e-05\\
47	1.46598342011491e-05\\
48	1.37625443305083e-05\\
49	1.29370134181965e-05\\
50	1.21762090260476e-05\\
51	1.14739097927336e-05\\
52	1.08245983626826e-05\\
53	1.02233701691312e-05\\
54	9.66585547794671e-06\\
55	9.14815256292743e-06\\
56	8.66677025521083e-06\\
57	8.21857841320611e-06\\
58	7.80076510653369e-06\\
59	7.41079950467424e-06\\
60	7.04639963147129e-06\\
61	6.70550427874645e-06\\
62	6.38624848540631e-06\\
63	6.08694207924512e-06\\
64	5.80605086076418e-06\\
65	5.54218006413857e-06\\
66	5.29405979599657e-06\\
67	5.060532183183e-06\\
68	4.84054001158635e-06\\
69	4.63311666403253e-06\\
70	4.43737718976184e-06\\
71	4.25251036722758e-06\\
72	4.07777163588621e-06\\
73	3.91247679188826e-06\\
74	3.7559963552686e-06\\
75	3.60775053349408e-06\\
76	3.4672047020496e-06\\
77	3.33386535355125e-06\\
78	3.2072764559833e-06\\
79	3.08701617307779e-06\\
80	2.97269390884254e-06\\
81	2.86394764425328e-06\\
82	2.76044152452993e-06\\
83	2.66186368131146e-06\\
84	2.56792425621758e-06\\
85	2.47835361093569e-06\\
86	2.39290069923373e-06\\
87	2.31133159022431e-06\\
88	2.23342811984334e-06\\
89	2.1589866675066e-06\\
90	2.08781703499412e-06\\
91	2.01974142787362e-06\\
92	1.95459351977696e-06\\
93	1.89221759982238e-06\\
94	1.83246779028376e-06\\
95	1.77520732926678e-06\\
96	1.72030791432156e-06\\
97	1.66764909595405e-06\\
98	1.61711772466819e-06\\
99	1.56860743863943e-06\\
100	1.52201819414063e-06\\
};
\end{axis}
\end{tikzpicture}%
	\caption{Area expansion with different step sizes using the Runge-Kutta integration.
  Left: Rotation around $90$ degrees on a a bat shape of the MPEG-7 dataset \cite{mpeg-7} (black).
  If executed in one step ($T=1$) the shape expands (red) whereas for ten steps $T=10$ the area of the
  interior stays nearly the same (green). Right: Relative area expansion when performing the same deformation with increasing amount of steps $T$.
  }

	\label{fig:rotatedBat}
\end{figure}

\subsection{Optimization}\label{subsec:optimization}

In the previous sections we derived a coherent description of shape morphing using volume preserving deformation fields. We can now use this framework to construct an algorithm that matches two given point clouds $\mathcal{X}$ and $\mathcal{Y}$ while at the same time computing plausible interpolated shapes. In order to do that we need to simultaneously optimize for the deformation fields and the unknown correspondences. We encode the latter using a soft correspondence matrix $W\in[0,1]^{N\times M}$. High values of $W_{nm}\approx 1$ indicate a high correspondence probability for the point pair $(x_n,y_m)$. Moreover, we use the coefficients $a\in\mathbb{R}^K$ to represent the deformation fields $v(x)$ which according to \eqref{eq_trunckarhunenloeve} are completely determined by $a$.

Similar to \cite{myronenko2010pointsetregistration} and \cite{ma2014robust} we address shape registration in a probabilistic manner. We interpret the point cloud $\mathcal{X}$ as a Gaussian mixture model with the means located at the shifted points $f_n=x_n^{(T)}$ and the covariance $\sigma^2 I_D\in\mathbb{R}^{D\times D}$ for some $\sigma>0$. This enables us to simultaneously determine the deformation field coefficients $a\in\mathbb{R}^K$ and the correspondences $W\in[0,1]^{N\times M}$ by applying an expectation maximization approach.

Only using Euclidean distance as the measure of similarity between points suffices to capture rigid deformations but this fails if source or target undergo large \emph{non-rigid} deformations and often leads to incorrect local optimums. Therefore, we incorporate point-wise feature descriptors
in our model. They can account for large scale deformations but also encode information about fine scale structures which steers the optimization towards the right optimum.
Specifically, we use the SHOT descriptor \cite{tombari2010SHOT} with the distance function $\mathrm{d}^{\text{SHOT}}_{nm} = \|\text{SHOT}(x_n) - \text{SHOT}(y_m) \|_2$ on the sets of points $\mathcal{X}$ and $\mathcal{Y}$. The standard GMM formulation only uses the Euclidean distance $\mathrm{d}^{\text{Euclid}}_{nm}=\|y_m-f_n\|_2$ of the shifted point $f_n$ and $y_m$. We want to use a combination of both, therefore we will define the distance of $f_n$ and $y_m$ to be:
\begin{equation}
	\label{eq_distancemetric}
	\mathrm{d}:=\mathrm{d}^{\text{Euclid}}+\frac{\overline{\mathrm{d}}^{\text{Euclid}}}{\overline{\mathrm{d}}^{\text{SHOT}}}\mathrm{d}^{\text{SHOT}}.
\end{equation}

Here, $\overline{\mathrm{d}}\geq 0$ is the mean distance of a metric $\mathrm{d}$ regarding all point pairs in $\mathcal{X}$ and $\mathcal{Y}$ and the factor at $\mathrm{d}^{\text{SHOT}}$ is used to ensure that both metrics have a comparable scaling. Note that $\mathrm{d}$ is a metric on the point clouds $\mathcal{X}$ and $\mathcal{Y}$ as a positive combination of metrics. Using this notion of distance, we can specify how to update the soft correspondences $W$ using the current estimate of the deformation field parameters $a$. This corresponds to the E step of the expectation maximization algorithm:

\begin{equation}
	\label{eq_weightsW}
	W_{nm}:=\frac{\exp\biggl(-\frac{1}{2\sigma^2}\mathrm{d}_{nm}^2\biggr)}{(2\pi\sigma^2)^{\frac{D}{2}}+\sum_{\tilde{n}=1}^{N}\exp\biggl(-\frac{1}{2\sigma^2}\mathrm{d}_{\tilde{n}m}^2\biggr)}.
\end{equation}

In the context of \eqref{eq_trunckarhunenloeve} it was mentioned that the prior of the coefficients $a$ is a Gaussian distribution $a\sim\mathcal{N}(0,L)$, where $L:=\mathrm{diag}(\lambda_1,...,\lambda_K)$. Together with the GMM assumption for the shape $\mathcal{X}$ we can construct the following energy function for the M step:

\begin{equation}
	\label{eq_energyE}
	E(a):=\frac{1}{2}a^TL^{-1}a+\frac{1}{\sigma^2}\sum_{m=1}^{M}\sum_{n=1}^{N}W_{nm}\rho(\|y_m-f_n\|_2).
\end{equation}

The function $\rho:\mathbb{R}\to[0,\infty)$ is the Huber loss \cite{huber1964huberloss}:
\begin{equation}
\rho(r)=\begin{cases}
\frac{1}{2}r^2&|r|\leq r_0.\\r_0|r|-\frac{1}{2}r_0^2&\text{otherwise.}
\end{cases}
\end{equation}

In our evaluations, we choose the outer slope as $r_0:=0.01$. Note that the Huber loss does not directly arise from the standard GMM formulation, but it admits another probabilistic interpretation as an additive mixture of Huber density functions. It also helps to account for outliers and makes the deformation field estimation more robust in general.

In our experiments we apply a Gauss-Newton type approach to minimize the energy in \eqref{eq_energyE}. This results in an iterative method similar to the Levenberg-Marquardt algorithm \cite{levenberg1944leastsquares}. The overall expectation maximization algorithm now alternates between computing the weights $W^{(i)}$ according to \eqref{eq_weightsW} and performing one Gauss-Newton update step to obtain $a^{(i)}$. To initialize the algorithm we usually set the deformation field to be zero $a^{(0)}:=0$.

\section{Experiments}\label{sec:experiments}

We perform experiments for several applications including shape matching and interpolation to show that our method is general
and flexible. Although we handle shapes with up to $200k$ and more vertices, the computation of the deformation field is always done on
a downsampled version with $3000$ vertices and then applied to the full resolution. We use Euclidean farthest point sampling.
When downsampling the subset should include all relevant fine scale structures in order for the deformation field to move these
correctly but we found $3000$ sufficient for all our applications. As a preprocessing step, we align the shapes using PCA and shift
them such that the empirical mean of the point clouds corresponds to the middle of the domain. When averaging over all experiments presented here, our algorithm takes about $370$ seconds to compute all correspondences for one pair of shapes. Due to our a priori downsampling the runtime is pretty consistent and almost independent of the number of vertices. All experiments were performed with MATLAB on a system with an Intel Core i7-3770 CPU clocked at 3.40GHz, 32 GB RAM and a GeForce GTX TITAN X graphics card running a recent Linux distribution. In all our experiments, we solely operate on the raw shape data and in particular do not need any ground truth correspondences.

\subsection{Matching}\label{subsec:expmatching}

We verify our method using the TOSCA dataset \cite{bronstein2008numerical} which contains 76 triangular meshes. The dataset is divided into $8$ classes of humans and animals with several poses each and known intraclass correspondences.

We set the hyperparameters $\sigma^2:=0.01,\,T:=20$ and choose $K=3000$ basis functions for the deformation field. Because $W^{(i)}$ only contains $3000$ correspondences, we perform a nearest-neighbor search with the metric \eqref{eq_distancemetric} to obtain a dense mapping. The evaluation is done with the Princeton benchmark protocol \cite{kim11}. Given the ground-truth match $(x, y^*) \in \mathcal{X} \times \mathcal{Y}$, the error of the calculated match $(x, y)$ is given by the geodesic distance between $y$ and $y^*$ normalized by the diameter of $\mathcal{Y}$.
\begin{equation*}
\epsilon(x) = \frac{ \mathrm{d}^{\text{Geo}}_\mathcal{Y}(y, y^*) }{ \text{diam}(\mathcal{Y}) }
\end{equation*}

We plot cumulative curves showing the percentages of matches that are below an increasing threshold. As zero is the value for ground-truth matches, the ideal curve would be constant at $100$. See Figure~\ref{fig:errorTOSCA} for our results and Figure~\ref{fig:exampleTOSCA} for an example matching.

\begin{figure}
	\centering
	\input{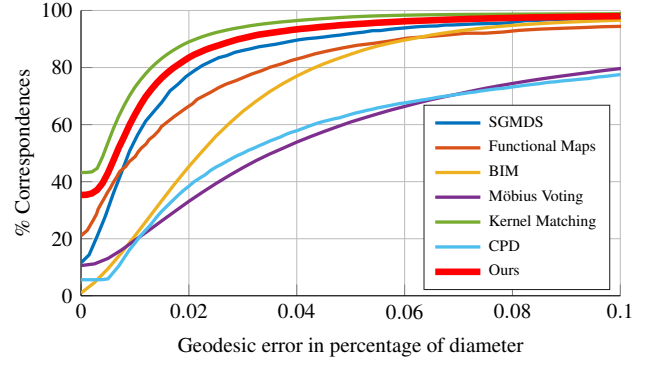}
	\caption{Quantitative evaluation using the Princeton benchmark protocol on the TOSCA dataset \cite{bronstein2008numerical}. We compare with Spectral Generalized Multi-Dimensional Scaling (SGMDS) \cite{aflalo2016spectral}, Functional Maps \cite{ovsjanikov2012functional}, Blended Intrinsic Maps (BIM) \cite{kim11}, M\"obius Voting \cite{lipman2009mobius}, Coherent Point Drift (CPD) \cite{myronenko2010pointsetregistration} and Kernel Matching \cite{kernel17}. Only our method and CPD model an extrinsic morphing of the shapes in the embedding space. }
	\label{fig:errorTOSCA}
\end{figure}

\begin{figure}
	\centering
	\includegraphics[width=.48\linewidth]{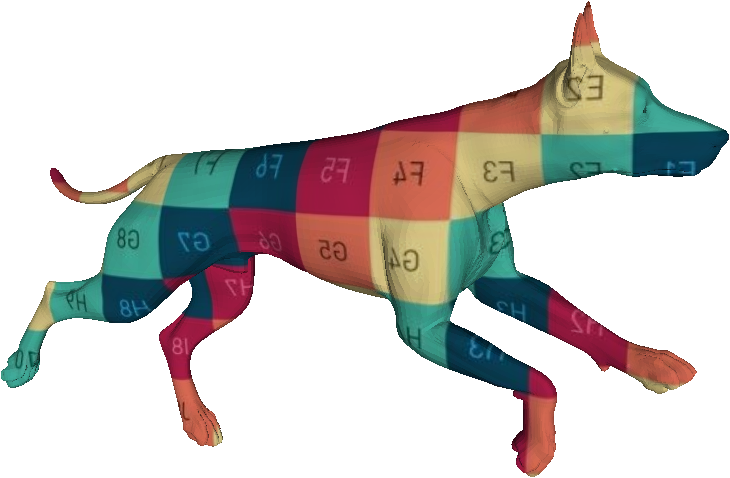}
	\includegraphics[width=.42\linewidth]{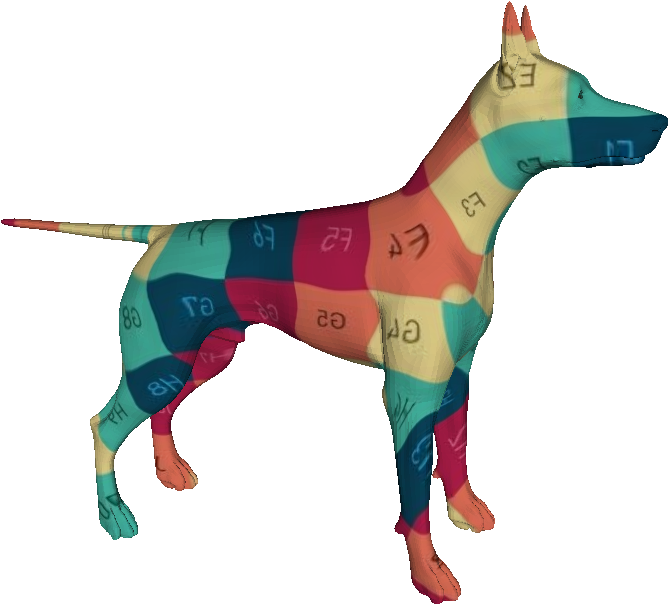}
	\caption{Example of a correspondence on the dog from the TOSCA data set calculated with our method. Same color means the points were matched with each other. }
	\label{fig:exampleTOSCA}
\end{figure}

\subsection{Registration}\label{subsec:expregistration}
We apply our framework to the FAUST dataset \cite{Bogo:CVPR:2014}, which contains data from scans of real humans with different poses. Each of these shapes has approximately $200k$ vertices and some of them are severely affected by topological noise. We set $\sigma^2:=0.01$, $K:=3000$ and we use a temporal discretization of $T=20$ steps. Again we match the null shape of every person to all its other poses. In Figure \ref{fig:faust} we display the surface distance of the morphed shapes to the goal shape for some examples.

\begin{figure*}
	\begin{overpic}
		[width=.19\linewidth]{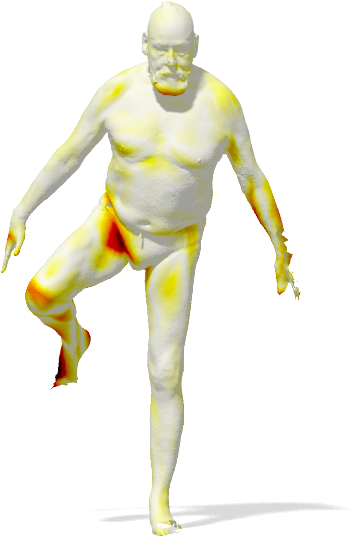}
		\put(0,-5){\tiny{avg=$0.67$cm\ \ max=$7.65$cm}}
	\end{overpic}
	\begin{overpic}
		[width=.13\linewidth]{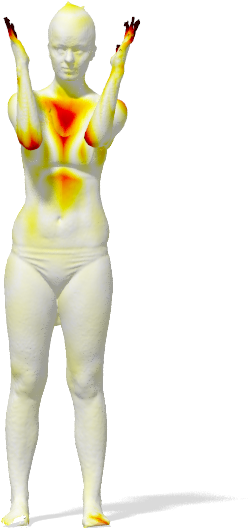}
		\put(-15,-5){\tiny{avg=$0.66$cm\ \ max=$10.12$cm}}
	\end{overpic}
	\begin{overpic}
		[width=.18\linewidth]{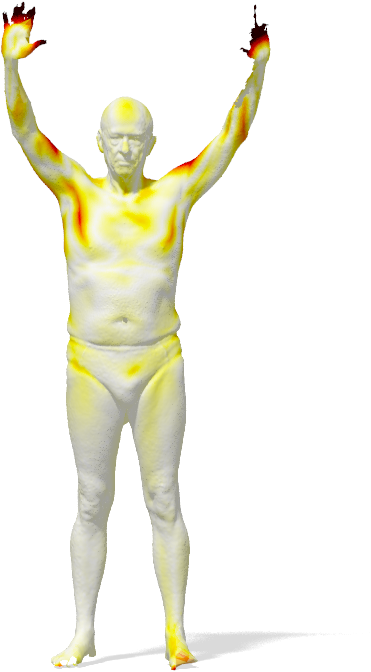}
		\put(0,-4){\tiny{avg=$0.63$cm\ \ max=$14.45$cm}}
	\end{overpic}
	\begin{overpic}
		[width=.17\linewidth]{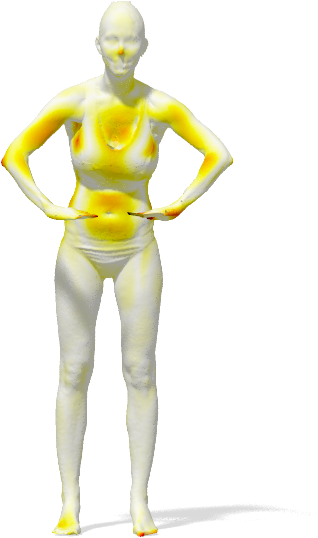}
		\put(0,-4.5){\tiny{avg=$0.65$cm\ \ max=$7.43$cm}}
	\end{overpic}
	\begin{overpic}
		[width=.14\linewidth]{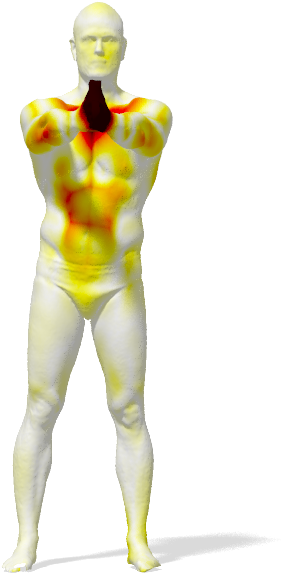}
		\put(-5,-4){\tiny{avg=$1.21$cm\ \ max=$15.23$cm}}
	\end{overpic}
	\begin{overpic}
		[width=.14\linewidth]{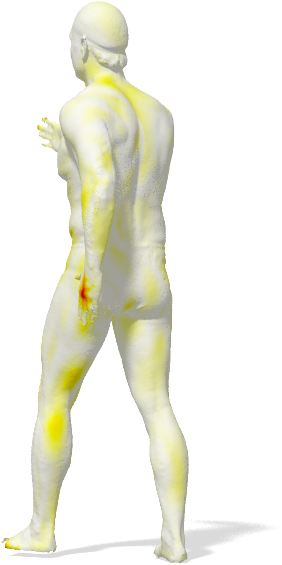}
		\put(-5,-4){\tiny{avg=$0.42$cm\ \ max=$3.65$cm}}
	\end{overpic}
	\begin{overpic}
		[angle=-90,width=0.01\linewidth]{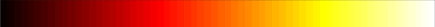}
		\put(0,-7){\small $0$}
		\put(0,102){\small $5$}
	\end{overpic}
	\caption{Example registrations from the FAUST scan data set. The surface color corresponds to the Euclidean surface distance between scan and registration. The scale is the same on all plots. All measures are in cm. We report the average and maximum error under each image. Many errors occur due to the SHOT descriptors being corrupted at holes and in noisy areas (e.g. the hands), the volume being exactly preserved although this is only an approximate property due to noise in real scans and topological changes (second to the right). }
	\label{fig:faust}
\end{figure*}

\subsection{Shape Interpolation}\label{subsec:expinterpolation}

\begin{figure}
	\centering
	\includegraphics[width=.95\linewidth]{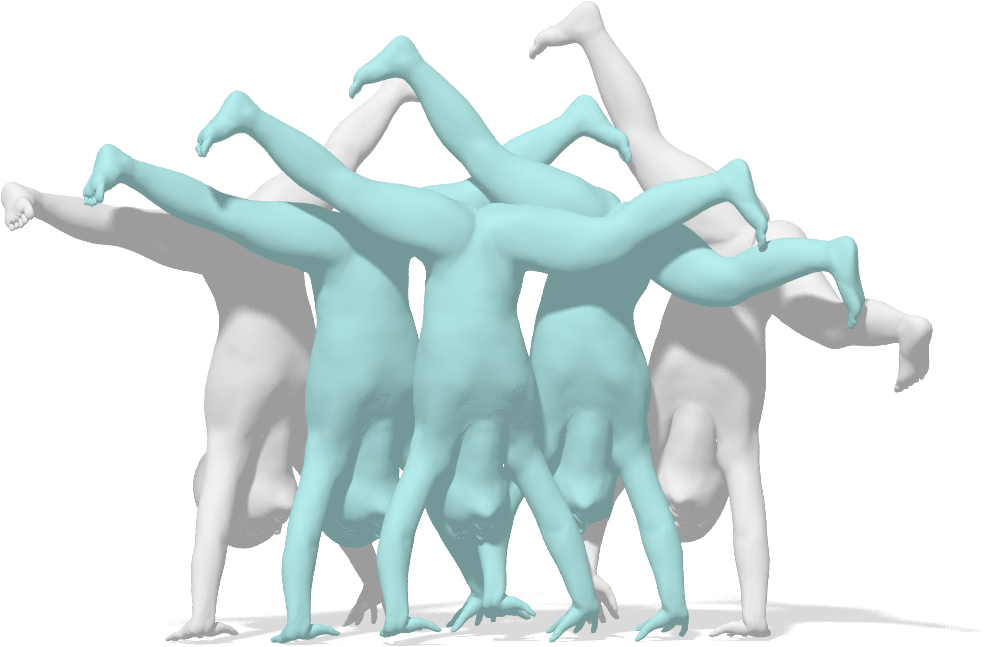}
	\caption{Example of an interpolation between two input shapes (white) from the Kids dataset \cite{rodola-cvpr14}. The interpolated shapes (blue) are at times $t=\{0.25, 0.5, 0.75\}$. }
	\label{fig:interp_kid}
\end{figure}

In comparison to other shape matching approaches, our setup models deformations in a comprehensive manner. It has a built in description of the actual transformation shapes undergo over time to morph into one another. Therefore it produces interpolated shapes a byproduct.

In some sense, our approach can be considered to be the extension of other shape registration methods based on a Gaussian mixture model representation of a point cloud \cite{myronenko2010pointsetregistration}, \cite{ma2014robust}. These particular methods correspond to our approach, if we choose to integrate the initial value problem \eqref{eq_initalvalueproblem} with the forward Euler scheme and $T=1$ time step. In particular this leads to a linear dependence of the quantity $f_n$ on the unknowns $a$ and the mapping $f$ for this case admits the following form:
\begin{equation}
\label{eq_oneeulerstep}
f_n:=f(x_n)=x_n+\sum_{k=1}^{K}v_k(x_n)a_k.
\end{equation}

As a consequence the shift \eqref{eq_oneeulerstep} of each point $x_n$ over time yields affine linear trajectories. Therefore the intermediate configurations $x^{\text{lin}}(t)$ are equivalent to a pointwise linear interpolation.
In contrast to this our method produces point trajectories which correspond to solutions $x(t)$ of the ODE \eqref{eq_initalvalueproblem}. We can now evaluate those at an intermediate time $t\in[0,1]$. Three examples of this qualitative evaluation were already displayed in Figure \ref{fig:interpol}, another one is provided in \ref{fig:interp_kid}.

In comparison to our approach linear interpolation distorts the shapes considerably, see Figure~\ref{fig:ipol_compare}. It also changes the size of certain parts of the shape which is inconsistent with transformations in the real world. In contrast to this our method is locally volume preserving. Nevertheless, it can still stretch and bend the shapes. Therefore, it is suitable for modeling shape morphing for rigid, as well as elastic objects. Linear interpolation performs especially bad if the considered transformations include large rotations.

\begin{figure}[b]
	\centering
	\includegraphics[width=0.45\linewidth]{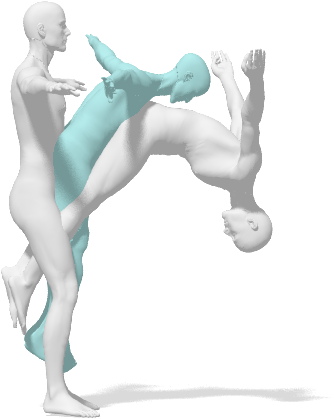}
	\includegraphics[width=0.45\linewidth]{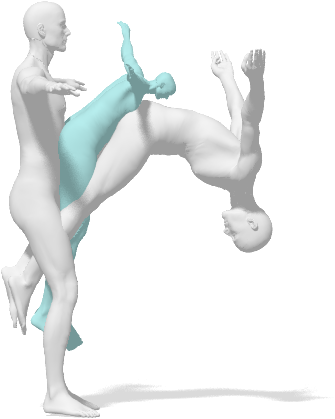}
	\caption{Comparison of the interpolated shapes at time $t=0.5$ produced by our method (left) and linear interpolation (right). This is example is hard because the head is rotated by more than $90$ degrees in the target. While linear interpolation squeezes the head in between our method finds a more realistic solution.
	}
	\label{fig:ipol_compare}
\end{figure}

\paragraph*{Extrapolation}

According to \eqref{eq_initalvalueproblem} the deformation field is independent of the time $t$ which makes the ODE autonomous. Therefore, we can also use the computed vector field $v$ to determine the solutions of this initial value problem at times $t>1$ which produces extrapolated shapes, see Figure~\ref{fig:extrapol_dog}. It is obviously an underdetermined task and it is hard to evaluate quantitatively. Nevertheless the extrapolation shapes our method produces are in many cases quite realistic for moderate time spans $t\in[1,1.5]$. Another example is displayed in Figure~\ref{fig:extrapol_handstand}. We observe that the speed of the extrapolated shapes seems to slow down after a certain timespan, especially when the shape is moving in previously unoccupied space. For the optimization there is simply no incentive to impose any particular movement on these parts of the domain $\Omega$. However, the volume preservation and uniformity assumption infer to the algorithm that an extension of the previous movement to some extend is desirable. Overall the resulting shapes are visually appealing and not too severely affected by distortions.

\begin{figure}[b]
	\centering
	\includegraphics[width=0.45\textwidth]{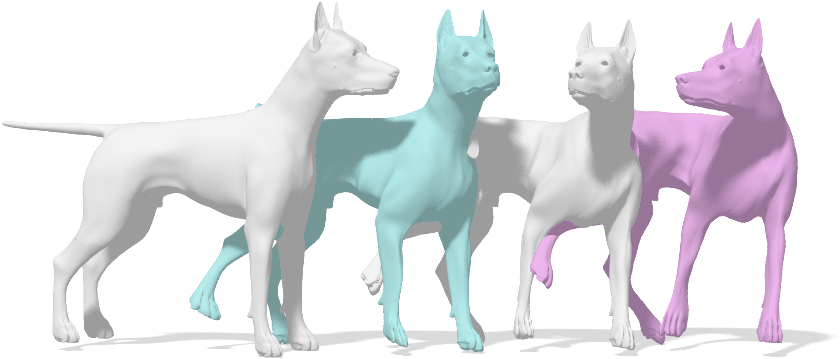}
	\caption{Example of an extrapolated shape produced by our method for two shapes. It can be determined using the temporally fixed deformation field $v$ for simulating the initial value problem \eqref{eq_initalvalueproblem} up to the time $t=1.3$. Source and target shape are white, one interpolated shape is shown in blue and the extrapolation is pink.}
	\label{fig:extrapol_dog}
\end{figure}

\begin{figure}[b]
	\centering
	\includegraphics[width=0.95\linewidth]{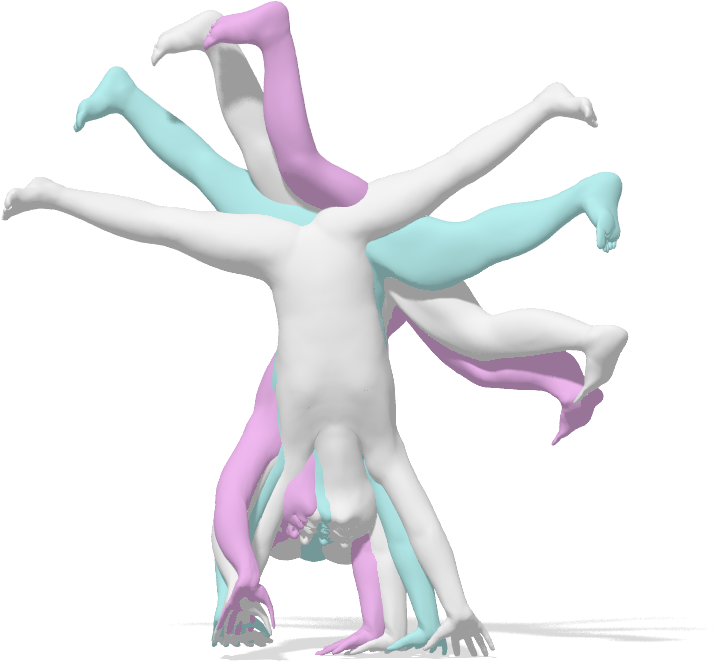}
	\caption{Example of an extrapolated shape from the KIDS dataset \cite{rodola-cvpr14} at the time $t=1.3$. The deformation field is usually magnified in the area between the input shapes and fans out in several directions further away from the input shapes. Therefore, choosing a really high time does not lead to broken shapes but the movement slows down more and more until it basically stops. }
	\label{fig:extrapol_handstand}
\end{figure}

\section{Conclusion}\label{sec:conclusion}

We presented a novel method solving the shape correspondence
problem while simultaneously computing a smooth,
volume preserving deformation field between the input
shapes. Furthermore, this deformation can be used to
efficiently calculate plausible interpolated shapes between the inputs at any intermediate point in time. The method consists of
two parts, the first is the optimization of the deformation
field using an expectation maximization approach and the
second applies the deformation to the input shapes using a Runge-Kutta scheme. The big advantage is that a subsampling
with around $3000$ vertices is sufficient to obtain the
deformation field defined in the continuous embedding space
due to our choice of basis. Therefore, the result can be
applied to any resolution mesh without slowing the
optimization.

We show quantitative results for shape correspondence and
registration that can compare to state-of-the-art methods for
these specific tasks and examples of shape interpolation and
extrapolation that arise naturally from our pipeline.

\begin{figure}[b]
	\centering
	\includegraphics[width=0.95\linewidth]{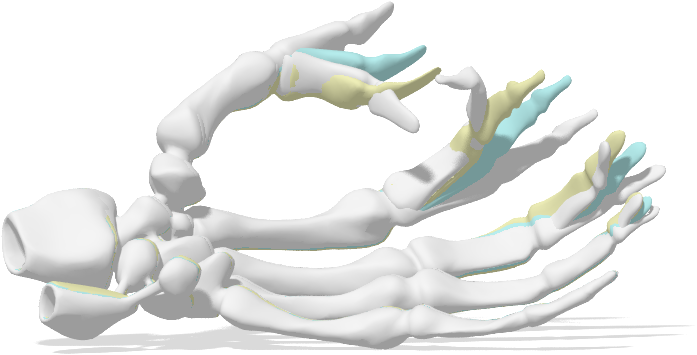}
	\caption{ Example of a failure case. The source and target shapes are white, the interpolated shape at $t = 0.5$ is blue and the resulting shape at $t = 1$ is yellow. The yellow is supposed to be as close to the target as possible but fails to do so in this case. The thumb and index finger are supposed to move in spatially very close areas (although at different time steps). Because we are only calculating one deformation field for all time steps our method does not end up in the target pose. Instead the finger are pushed away from each other. }
	\label{fig:hand_failure}
\end{figure}

\subsection{Limitations}\label{subsec:limitations}

Due to our choice of basis the deformation field is forced to be volume
preserving. This makes sense in applications with the same object but
prevents inter-object matchings - for example between two humans with
different body shapes.

For the same reason, our method has problems with topological changes. According to Proposition \ref{proposition:volumepreservation} the volume preservation property applies to every subregion of the domain $\Omega$, including the intermediate space between parts of the shape. Therefore, separating two touching parts (for example two hands) is in theory possible but requires
many high frequency deformation basis elements which makes the optimization costly.

Since there is not one unique, volume preserving deformation between two shapes,
our interpolation is not guaranteed to be as-rigid-as-possible which is plausible
in many applications. If the displacement is spatially far, we might end up with
squeezed intermediate states that are volume preserving but are affected by undesirable distortions.

The assumption of \eqref{eq_initalvalueproblem} being autonomous can also be problematic, if different parts of the shape move through the same region of the embedding space in a contradictory manner. One example for this is a hand closing to a fist. At first the index and middle finger occupy parts of the embedding space before the thumb moves in the same area but in a different direction. See Figure~\ref{fig:hand_failure}. A possible remedy for this problem is making the deformation fields time dependent.

\subsection{Future Work}\label{subsec:futurework}

Right now, the method will always find a solution that is globally volume
preserving. This allows to find good deformations fields in the case of
severe non-rigid deformations but is not applicable to partial data. In the
future, we want to extend this method to work on real scans, for example from
the Kinect, which naturally only show partial shapes by making the deformation
only locally volume preserving. This might also help with the separation of
close parts and handling non volume preserving deformations like style
or class changes.
Furthermore, we only calculate one time independent field for the entire deformation which means
mass at one spatial point always needs to move in the same direction, even at
a later time step. This restricts the complexity of the deformations that our method can handle, especially for large-scale motions over a longer period of time. It also leads to problems when non-matching
parts of the input shapes overlap in the initialization. Future versions should
allow more flexible types of deformation fields to extend it to a broader range of applications. We could for example associate different parts of the shape with different deformations fields or let them vary over time to address more difficult tasks.

{\small
	\bibliographystyle{eg-alpha-doi}
	\bibliography{refs}
}

\end{document}


\setcounter{equation}{22}

\maketitle

\section{Optimization: Details}\label{sec:optimizationdetails}

In section \ref{subsec:optimization} of the paper the expectation maximization framework of our method was already outlined. Here, we want to provide a more detailed description of the method. At the same time, we try to keep it as brief as possible, because most parts of this summary are standard techniques when dealing with Gaussian mixture models.

\subsection{Gaussian mixture model}\label{subsec:gmm}

As outlined above, we interpret the shifted points $f_n=x_n^{(T)}$ as the centers of Gaussian distributions with the covariance matrix $\sigma^2 I_D\in\mathbb{R}^{D\times D}$ which in the end should describe $\mathcal{Y}$ well. Furthermore, each point $y_m$ is assumed to correspond to some point $x_n$. This relationship is assumed to be encoded by the correspondence matrix $Z\in\{0,1\}^{(N+1)\times M}$, where $\sum_{n=1}^{N+1}Z_{nm}=1$. If $Z_{(N+1)m}=1$ for some $m$, the point $y_m$ does not correspond to any point $x_n$ and it is assumed to be uniformly sampled from $\Omega$ instead. This way we counteract the effect of outliers by acknowledging their presence and building it into our model.

According to Bayes' theorem the posterior probability distribution of the desired parameters $a_k$ in \eqref{eq_trunckarhunenloeve} given the latent correspondences $Z_{nm}$ and the observed points $\mathcal{Y}$ is defined as follows:
\begin{multline}
\label{eq_posteriorA}
p(a|Z,\mathcal{Y})\propto p(a)p(\mathcal{Y}|Z,a)=p(a)\prod_{m=1}^{M}p(y_m|Z,a)\\=p(a)\prod_{m=1}^{M}\prod_{n=1}^{N+1}p(y_m|Z_{nm}=1,a)^{Z_{nm}}.
\end{multline}

In the context of \eqref{eq_trunckarhunenloeve} it was mentioned that the prior of the parameters $a_k$ is a Gaussian distribution $a_k\sim\mathcal{N}(0,\lambda_k)$. As a shorthand notation we now define the diagonal matrix $L:=\mathrm{diag}(\lambda_1,...,\lambda_K)$ and set the prior $a\sim\mathcal{N}(0,L)$ for the coefficient vector $a$. In order to explicitly evaluate the posterior density of $a$ in \eqref{eq_posteriorA} we have to investigate the data likelihood $p(y_m|Z_{nm}=1,a)$ in detail. For $n<N+1$ it is now constructed to be the density of a Gaussian distribution in the product space of the embedding space $\Omega$ and the space of descriptor values:
\begin{equation}
\label{eq_probym}
p(y_m|Z_{nm}=1,a)=\frac{1}{(2\pi\sigma^2)^{\frac{D}{2}}}\exp\biggl(-\frac{1}{2\sigma^2}\mathrm{d}_{nm}^2\biggr).
\end{equation}

For the case $n=N+1$ it is simply the density of the uniform distribution, because $y_m$ is considered to be an outlier:
\begin{equation}
\label{eq_probymNplus1}
p(y_m|Z_{(N+1)m}=1,a)=1.
\end{equation}

Note that the GMM is not defined solely on the $D$ dimensional embedding space, but rather on the product space of $\Omega$ and the (possibly high dimensional) feature space. However, this only affects how the correspondences are computed and can be considered a theoretical nuance.

\subsection{Expectation maximization}\label{subsec:em}

We want to determine the coefficients $a$ by applying an expectation maximization approach similar to \cite{myronenko2010pointsetregistration}. In the E step soft correspondences $W\in[0,1]^{(N+1)\times M}$ are determined as a relaxed version of the latent variables $Z$:
\begin{multline}
W_{nm}=\mathbb{E}_{Z|\mathcal{Y},a}(Z_{nm})=p(Z_{nm}=1|y_m,a)\\=\frac{p(y_m|Z_{nm}=1,a)}{p(y_m|a)}=\frac{p(y_m|Z_{nm}=1,a)}{\sum_{\tilde{n}=1}^{N+1}p(y_m|Z_{\tilde{n}m}=1,a)}.
\end{multline}

The data likelihood terms $p(y_m|Z_{nm}=1,a)$ are defined in \eqref{eq_probym} and \eqref{eq_probymNplus1}. This leads to the expression proposed in \eqref{eq_weightsW}. Note, that the matrix $W$ has one more row than in the previous definition above. However, these two formulations are equivalent because this version of $W\in[0,1]^{(N+1)\times M}$ is assumed to be stochastic and the last row can be computed using the other entries. Moreover, the final algorithm does not explicitly depend on the entries of the last row of $W$ anyway and therefore the two formulations are interchangeable.

The M step now consists of minimizing the following energy with respect to $a$:
\begin{multline}
\label{eq_energyDist}
\mathbb{E}_{Z|\mathcal{Y},a}\bigl(-\log p(a|Z,\mathcal{Y})\bigr)\\=-\log p(a)-\sum_{m=1}^{M}\sum_{n=1}^{N+1}W_{nm}\log p(y_m|Z_{nm}=1,a)\\\propto\frac{1}{2}a^TL^{-1}a+\frac{1}{2\sigma^2}\sum_{m=1}^{M}\sum_{n=1}^{N}W_{nm}\|y_m-f_n\|_2^2.
\end{multline}

Note that the log likelihood of $y_m$ for the case $n=N+1$ is simply zero, see \eqref{eq_probymNplus1}. In this context the shifted points $f_n$ depend on the unknown coefficients $a$. However, the descriptor distances $\mathrm{d}^{\text{SHOT}}$ are independent of $a$, thats why the last proportionality in \eqref{eq_energyDist} holds. In particular this means that the last two expressions are equivalent up to summands that do not depend on $a$.

\subsection{Robust correspondences}

As proposed in \eqref{eq_energyE} we will reformulate the correspondence penalization term $\frac{1}{2}\|y_m-f_n\|_2^2$ in \eqref{eq_energyDist} to in order to make our method more robust:
\begin{equation}
\label{eq_corrpenelization}
\rho(\|y_m-f_n\|_2).
\end{equation}

Note that this heuristic admits a probabilistic interpretation, but we refrain from providing the background to this property. We choose the outer slope as $r_0:=0.01$. For values $\|y_m-f_n\|_2\leq r_0$ the term \eqref{eq_corrpenelization} remains exactly the same, but for bigger residua the penalization by the Huber loss only grows linearly. Due to this property the Huber norm does not penalize outliers exorbitantly high and is therefore more robust than the standard least squares loss.

\subsection{Algorithm}

We minimize the energy $E$ from \eqref{eq_energyE} with a Gauss-Newton type iteration scheme. A thorough derivation is provided in the next subsection. The complexity of this approach stems from the elaborate dependency \eqref{eq_rungekutta} of $f_n$ on the deformation parameters $a$. We already mentioned that we want to perform an expectation maximization algorithm which determines the coefficients $a$ and the weights $W$ in an alternating manner. In order to avoid a nesting of iteration schemes we refrain from computing exact minimizers of $E$ every time we update $a^{(i)}$. Instead we merely perform one Gauss-Newton update step per iteration.

\subsection{Gauss Newton}

According to \eqref{eq_trunckarhunenloeve} the deformation field $v$ is exactly determined by the coefficient vector $a\in\mathbb{R}^K$ for the representation we use. In order to compute the optimal deformation parameters $a$ we have to minimize the energy $E(a)$ defined in \eqref{eq_energyE}. For this purpose we will first discuss how to optimize the following energy, where the robust Huber loss was replaces by the standard least squares loss:
\begin{equation}
\label{eq_energyEnoHuber}
E^{\text{LS}}(a):=\frac{1}{2}a^TL^{-1}a+\frac{1}{2\sigma^2}\sum_{m=1}^{M}\sum_{n=1}^{N}W_{nm}\|y_m-f_n\|_2^2.
\end{equation}

We assume in this context that the correspondence $W$ are fixed. For this optimization we incorporate a Gauss-Newton type method which yields an iteration scheme converging to a local minimum of $E^{\text{LS}}$. Like in the Levenberg-Marquardt algorithm \cite{levenberg1944leastsquares} the iteration will contain an additional damping term $L^{-1}$ which is added to the Hessian of the non-linear least squares term.

Applying the standard Gauss-Newton methodology we get an iterative method to determine the weights $a$. The general idea of this approach is that the shifted points $f_n(a)$ are linearized around the current iterate $a^{(i)}$ in the energy \eqref{eq_energyEnoHuber}:
\begin{multline}
\label{eq_linEnergyE}
E^{\text{LS}}(a)\approx\\\frac{1}{2}a^TL^{-1}a+\frac{1}{2\sigma^2}\sum_{m=1}^{M}\sum_{n=1}^{N}W_{nm}\|y_m-\underbrace{\bigl(f_n(a^{(i)})+D_af_n(a^{(i)})(a-a^{(i)})\bigr)}_{\approx f_n(a)}\|_2^2.
\end{multline}

This approximate energy is linear in the current unknown $a$ so the remaining task is a simple linear least squares problem. The recursion formula to compute the approximate deformation parameters $a^{(i)}$ then admits the following explicit form:
\begin{equation}
\label{eq_coefficientsA}
a^{(i+1)}:=a^{(i)}-\bigl(J^T\tilde{W}J+\sigma^2L^{-1}\bigr)^{-1}\bigl(J^Tr-\sigma^2L^{-1}a^{(i)}\bigr).
\end{equation}

In this context $J$ consists of the Jacobians of $f_n$, $r$ of the (weighted) distance residuals and $\tilde{W}$ is a diagonal matrix containing the column sums of $W$. Let $e_D=(1,...,1)^T\in\mathbb{R}^D,e_M=(1,...,1)^T\in\mathbb{R}^M$, then these quantities are explicitly defined as:
\begin{subequations}
	\begin{equation}
	\label{eq_jacobianf}
	J=\begin{pmatrix}\mathrm{D}_af_1\\\vdots\\\mathrm{D}_af_N\end{pmatrix}\in\mathbb{R}^{ND\times K}.
	\end{equation}
	\begin{equation}
	r=\begin{pmatrix}\sum_{m=1}^{M}W_{1m}(f_1-y_m)\\\vdots\\\sum_{m=1}^{M}W_{Nm}(f_N-y_m)\end{pmatrix}\in\mathbb{R}^{ND}.
	\end{equation}
	\begin{equation}
	\tilde{W}=\textrm{diag}\bigl((We_M)\otimes e_D\bigr)=\textrm{diag}\begin{pmatrix}e_D\sum_{m=1}^{M}W_{1m}\\\vdots\\e_D\sum_{m=1}^{M}W_{Nm}\end{pmatrix}\in\mathbb{R}^{ND\times ND}.
	\end{equation}
\end{subequations}

What is left to specify is how to compute the derivatives $\mathrm{D}_af_n\in\mathbb{R}^{D\times K}$ in \eqref{eq_jacobianf}. Note that $f_n=x_n^{(T)}$ is recursively defined in \eqref{eq_rungekutta} and differentiating $f_n$ wrt. $a$ is not entirely trivial. However, it can be done in a straightforward manner by applying the chain rule to each element of the recursion. As a result the derivative $\mathrm{D}_ax_n^{(t)}$ is passed from the first time step $t=0$ to the last $t=T$ and gradually modified in each step. Inserting the Karhunen-Loève representation \eqref{eq_trunckarhunenloeve} in the definition \eqref{eq_rungekutta} yields the following recursive formula:
\begin{equation}
x_n^{(t+1)}(a):=x_n^{(t)}(a)+h\sum_{k=1}^{K}v_k\bigl(x_n^{(t)}(a)\bigr)a_k.
\end{equation}

The dependencies on $a$ are denoted explicitly in order to make it more comprehensible. The quantities $x_n^{(t)}(a)$ can now be differentiated wrt. $a$:
\begin{subequations}
	\begin{equation}
	\mathrm{D}_ax_n^{(0)}=0.
	\end{equation}
	\begin{multline}
	\mathrm{D}_ax_n^{(t+1)}=\biggl(I_D+h\sum_{k=1}^{K}\mathrm{D}_xv_k\bigl(x_n^{(t)}\bigr)a_k\biggr)\mathrm{D}_ax_n^{(t)}+\\h\begin{pmatrix}\vline&&\vline\\v_1\bigl(x_n^{(t)}\bigr)&\dots&v_K\bigl(x_n^{(t)}\bigr)\\\vline&&\vline\end{pmatrix}.
	\end{multline}
\end{subequations}

Note that the Jacobian $\mathrm{D}_xv_k\in\mathbb{R}^{D\times D}$ can be computed analytically for any basis element $v_k$. In this context $I_D\in\mathbb{R}^{D\times D}$ is the identity matrix.

The only thing left to discuss is how to extend this approach for the Huber loss penalization version of the energy $E$ in \eqref{eq_energyE}.
For point distances $\|y_m-f_n\|_2\leq r_0$ the Huber loss and the least squares loss are the same. For residual values $\|y_m-f_n\|_2>r_0$ the derivate wrt. the deformation parameters $a$ is the following:
\begin{equation}
\mathrm{D}_a\rho(\|y_m-f_n\|_2)=r_0\frac{(f_n-y_m)^T}{\|f_n-y_m\|_2}\mathrm{D}_af_n.
\end{equation}

This eliminates the possibility of a direct Gauss-Newton type optimization which requires non linear least squares terms.	We can however incorporate this in our algorithm using a simple heuristic. For this purpose we multiply the respective weights $W_{nm}$ with the factor $r_0\frac{1}{\|f_n-y_m\|_2}$, if $\|y_m-f_n\|_2>r_0$.

\section{Karhunen-Loève expansion of the deformation field}
\label{sec:minimalDirichlet}

We will now provide some theoretical justification for the particular choice of basis in \eqref{eq_basisV} and the construction of the weights \eqref{eq_lambdak}.
These $\lambda_k$ can be interpreted to be the eigenvalues of the linear operator $\mathcal{C}:=(-\Delta)^{-\frac{D}{2}}$ corresponding to the eigenfunctions $\phi_k$. We can then apply the so-called Karhunen-Loève expansion \cite[Ch. 11]{sullivan2015UC} to our setup. This framework provides us with an alternative representation of the potential field $\Phi$, which can in turn be used to define the deformation field $v$. For further reference concerning the mathematical foundation of this approach the interested reader is referred to \cite{stuart2010inverseproblems}, \cite{cotter2013MCMC}, \cite{dashti2017bayesian}. Following this approach one can now derive a construction which enables us to sample arbitrary square integrable scalar fields $\hat{\Phi}:\Omega\to\mathbb{R}$:
\begin{equation}
\label{eq_karhunenloeve}
\hat{\Phi}(x)=\sum_{k=1}^{\infty}\phi_k(x)\sqrt{\lambda_k}\xi_k.
\end{equation}

According to the Karhunen-Loève expansion the coefficients $\xi_k\sim\mathcal{N}(0,1)$ are samples of the standard normal distribution. This approach can now be applied to get an alternative description of each entry of the potential vector field $\Phi$. Inserting this in \eqref{eq_vectorfieldfrompotential} we obtain an alternative representation of the deformation field $v$. In particular we get the summation \eqref{eq_trunckarhunenloeve} for the basis elements \eqref{eq_basisV} in the 3D case. Indeed we can derive the following Gaussian prior distribution for the weights $a_k$:

\begin{equation}
a_k=\sqrt{\lambda_k}\xi_k\sim~\mathcal{N}(0,\lambda_k).
\end{equation}

The choice of the exponent $\frac{D}{2}$ in the definition of the weights $\lambda_k$ \eqref{eq_lambdak} is not arbitrary. In general it is supposed to be chosen strictly larger than $\frac{D}{2}$ in order for our resulting basis to fulfill certain approximation properties in the limit of infinitely many basis functions, see \cite[Ch. 2.4]{dashti2017bayesian}. However, we achieved good results in our experiments by choosing it as small as possible in order to not suppress the high frequencies more severely than necessary. In particular the expressiveness of or method seems to deteriorate when a large exponent is chosen, because then the weights \eqref{eq_lambdak} decay too rapidly. Therefore we typically even set it to $\frac{D}{2}$ which works fine for our purposes, although this is not theoretically justified when the number of basis functions approaches infinity. On the other hand, choosing it smaller than $\frac{D}{2}$ certainly causes the expected value of the velocity series to diverge for $K\to\infty$.

To conclude this section we want to motivate our choice of the Karhunen-Loève framework and the particular linear operator $\mathcal{C}$ to model the deformation fields $v$. In the context of the Karhunen-Loève expansion the operator $\mathcal{C}$ is called covariance operator. It is typically chosen to incorporate some assumptions about the regularity of the produced sample functions.
A natural assumption about the deformation fields $v$ is that they are as uniform as possible. This yields that the resulting correspondence mappings are to some degree spatially continuous. Therefore we require the Dirichlet energy to be small:
\begin{equation}
\label{eq_nablav}
\|\nabla v\|^2_{L_2}=\sum_{d=1}^{D}\int_\Omega\|\nabla v_d(x)\|_2^2\mathrm{d}x.
\end{equation}

We can achieve this by penalizing the high frequency components of $v$. These frequencies are strongly related to those of the potential field $\Phi$, because according to \eqref{eq_vectorfieldfrompotential} the basis elements are simply mapped onto the velocity basis elements. This mapping does not change the frequencies:
\begin{equation}
\|\nabla v\|_{L_2}=\|\nabla(\nabla\times\Phi)\|_{L_2}=\|\nabla\Phi\|_{L_2}.
\end{equation}

If we choose e.g. $D=2$ one can prove that the Dirichlet energy $\|\nabla v\|^2_{L_2}$ is equivalent to the squared $\ell_2$ norm of the weights $\xi$:
\begin{equation}
\|\xi\|_{\ell_2}^2=\|\nabla v\|^2_{L_2}\text{, for }D=2.
\end{equation}

A derivation of this property can be found in \cite[Ch. 7.1.3]{dashti2017bayesian}. In the case of finitely many parameters $\xi_1,...,\xi_K$ the norm $\|\xi\|_{\ell_2}$ is equivalent to the Euclidean norm $\|\xi\|_2$ of the vector $\xi=(\xi_1,...,\xi_K)^T$. The term $\|\xi\|_2^2$ is in turn proportional to the negative log likelihood of the standard normal distributed parameter $\xi\sim\mathcal{N}(0,I_K)$:

\begin{equation}
-\log(p(\xi))= \frac{K}{2}\log(2\pi)+\frac{1}{2}\|\xi\|_2^2\propto\frac{1}{2}\|\xi\|_2^2.
\end{equation}

This indicates that a maximum likelihood approach involving $\xi$ leads to an enforcement of uniformity of the vector field $v$. This can be extended to the case $D=3$ in a similar manner, but we refrain from providing more details here for the sake of brevity.

{\small
	\bibliographystyle{eg-alpha-doi}
	\bibliography{refs}
}